\def\1{\bm{1}}
\def\mG{{\bm{G}}}
\def\mI{{\bm{I}}}
\def\mM{{\bm{M}}}
\def\mQ{{\bm{Q}}}
\def\mR{{\bm{R}}}
\def\mU{{\bm{U}}}
\def\mV{{\bm{V}}}
\def\mW{{\bm{W}}}
\def\mX{{\bm{X}}}
\def\mY{{\bm{Y}}}
\def\mSigma{{\bm{\Sigma}}}
\DeclareMathAlphabet{\mathsfit}{\encodingdefault}{\sfdefault}{m}{sl}
\SetMathAlphabet{\mathsfit}{bold}{\encodingdefault}{\sfdefault}{bx}{n}
\newcommand{\R}{\mathbb{R}}
\DeclareMathOperator*{\argmin}{arg\,min}
\DeclareMathOperator{\sign}{sign}
\definecolor{shadecolor}{gray}{0.92}
\declaretheoremstyle[
headfont=\normalfont\bfseries,
notefont=\mdseries, notebraces={(}{)},
bodyfont=\normalfont,
postheadspace=0.5em,
spaceabove=6pt,
mdframed={
  skipabove=8pt,
  skipbelow=8pt,
  hidealllines=true,
  backgroundcolor={shadecolor},
  innerleftmargin=4pt,
  innerrightmargin=4pt}
]{shaded}
\declaretheorem[style=shaded,within=section]{definition}
\declaretheorem[style=shaded,sibling=definition]{lemma}
\declaretheorem[style=shaded,sibling=definition]{remark}
\DeclareMathOperator*{\polar}{polar}
\DeclareMathOperator*{\odd}{odd}
\DeclareMathOperator*{\NS}{NS}
\DeclareMathOperator*{\rank}{rank}
\newcommand\numberthis{\addtocounter{equation}{1}\tag{\theequation}}
\newcommand{\T}{\mathsf{T}}
\newcommand{\N}{\mathbb{N}}
\newcommand{\F}{\text{F}}
\newcommand{\muon}{\texttt{Muon}\xspace}
\newcommand{\rev}[1]{#1}
\definecolor{codegreen}{rgb}{0,0.6,0}
\definecolor{codegray}{rgb}{0.5,0.5,0.5}
\definecolor{codepurple}{rgb}{0.58,0,0.82}
\definecolor{backcolour}{rgb}{0.95,0.95,0.92}
\lstdefinestyle{mystyle}{
    backgroundcolor=\color{backcolour},   
    commentstyle=\color{codegreen},
    keywordstyle=\color{magenta},
    numberstyle=\tiny\color{codegray},
    stringstyle=\color{codepurple},
    basicstyle=\ttfamily\footnotesize,
    breakatwhitespace=false,         
    breaklines=true,                 
    captionpos=b,                    
    keepspaces=true,                 
    numbersep=5pt,                  
    showspaces=false,                
    showstringspaces=false,
    showtabs=false,                  
    tabsize=2
}
\title{The Polar Express: Optimal Matrix Sign Methods and Their Application to the Muon Algorithm}
\author{Noah Amsel\\ 
New York University\\
\texttt{noah.amsel@nyu.edu}
\And David Persson \\
New York University\\
Flatiron Institute\\
\texttt{dup210@nyu.edu}
\And Christopher Musco\\
New York University\\
\texttt{cmusco@nyu.edu}
\And Robert M. Gower\\
Flatiron Institute\\
\texttt{rgower@flatironinstitute.org}
}
\crefname{lstlisting}{implementation}{implementations}
\Crefname{lstlisting}{Implementation}{Implementations}
\begin{document}

\maketitle

\begin{abstract}
Computing the polar decomposition and the related matrix sign function has been a well-studied problem in numerical analysis for decades. Recently, it has emerged as an important subroutine  within the {\muon} optimizer for training deep neural networks.
However, the requirements of this application differ sharply from classical settings: deep learning demands GPU-friendly algorithms that prioritize high throughput over high precision. We introduce \texttt{Polar Express}, a new method for computing the polar decomposition.\footnote{\url{https://github.com/NoahAmsel/PolarExpress}} Like Newton-Schulz and other classical polynomial methods, our approach uses only matrix-matrix multiplications, making it 
very efficient on GPUs.
Inspired by earlier work of Chen \& Chow and Nakatsukasa \& Freund, \texttt{Polar Express} adapts the update rule at each iteration by solving a minimax optimization problem.
We prove that this strategy minimizes error in a worst-case sense, allowing \texttt{Polar Express} to converge as rapidly as possible both in the early iterations and asymptotically.
We also address finite-precision issues, making it practical to use in \texttt{bfloat16}. When integrated into {\muon}, our method yields consistent improvements in validation loss for a GPT-2 model trained on one to ten billion tokens from the FineWeb dataset, outperforming recent alternatives across a range of learning rates.
\end{abstract}

\section{Introduction}
Advanced linear algebra is making its way into deep learning. Efficient algorithms for computing \emph{matrix functions} have found exciting new applications in training neural networks. In particular, approximations to the matrix-inverse are used in the full Adagrad method \citep{duchi2011adaptive}, the matrix square-root and quarter-root appear as subroutines in the Shampoo and Soap optimizers~\citep{shampoo,shi2023distributeddataparallelpytorchimplementation,vyas2025soap}, and most recently, the matrix sign function has become a key ingredient of the \muon optimizer~\citep{bernstein2024oldoptimizernewnorm,bernstein2024modulardualitydeeplearning,jordan2024muon}. While the problem of computing these matrix functions has been studied by numerical analysts for decades, applications in deep learning come with different requirements than those in computational science. For deep learning, it is critical to take maximum advantage of GPU-friendly operations like matrix-matrix products and to avoid less parallel operations. Moreover, memory overhead must be small to handle large models. On the other hand, high accuracy is typically less important; the gold standard of sixteen digits of accuracy is overkill in deep learning. 

Given these considerations, there is a need to develop new matrix function methods that are tailor-made for deep learning applications.
We take on this challenge by designing a state-of-the-art, GPU-friendly algorithm for computing the matrix sign function, or more generally, for computing the \emph{polar decomposition} of a rectangular matrix. 
We apply our new \texttt{Polar Express} method (\Cref{alg:polarexpress-gen}, \Cref{alg:polar-express}) to compute the descent direction in the increasingly popular \muon{} optimizer. 
In~\Cref{fig:fineweb1B}, we show that using \texttt{Polar Express} within \muon{} consistently results in lower validation loss across all learning rates when training a GPT-2 model, as compared to other matrix sign methods~\citep{cesista2025muonoptcoeffs,jordan2024muon}.

\begin{figure}[H]
\includegraphics[width=0.54\textwidth]{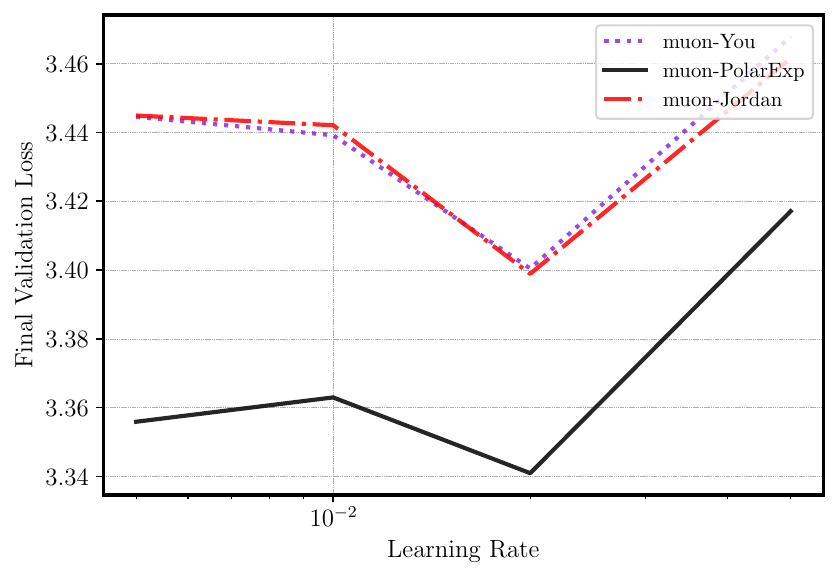}
\hfill
\hspace{0.3cm}
    \includegraphics[width=0.4\textwidth]{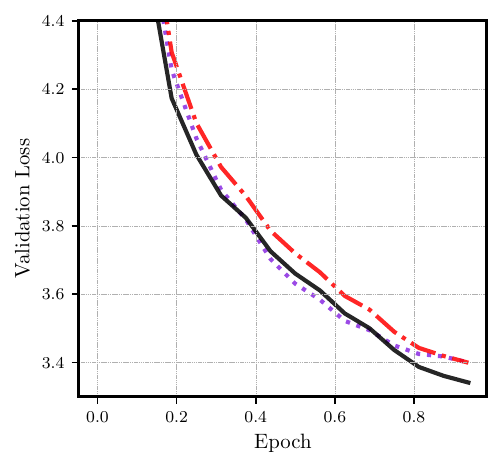}
    \caption{Training a GPT-2-Large
model (774M params) on 1 billion tokens from the FineWeb dataset~\citep{aroca2023fineweb}. The label muon-\texttt{<name>}  refers to implementing {\muon} using \texttt{<name>} to compute the polar factor.  Left: final validation loss across learning rates. Right: validation loss across epochs using the best learning rate.
The best learning rate ($lr$) and final validation loss for each method were 
\texttt{muon-You} $(lr= 0.02)$: $3.399$,  \texttt{muon-Jordan} $(lr= 0.02)$: $3.398$ and \texttt{muon-PolarExp} $(lr= 0.02)$: $3.340$. 
  }
  \label{fig:fineweb1B}
\end{figure}

\subsection{The Muon Method}
The {\muon} optimizer has recently gained popularity for training large language models, often outperforming state-of-the-art adaptive gradient methods like Adam and AdamW \citep{adam,loshchilov2018decoupled}. {\muon} has been used to set records for the  NanoGPT speedrun~\citep{jordan2024muon}, to expand the Pareto frontier of performance versus training FLOPs for large language models~\citep{liu2025muonscalablellmtraining, ai2025practicalefficiencymuonpretraining}, and even to train a \rev{1 trillion} parameter frontier LLM~\citep{kimiteam2025kimik2openagentic}.

The {\muon} update rule~\citep{bernstein2024oldoptimizernewnorm} is defined as follows.
Let $\lambda, \beta > 0$ be the learning rate and momentum coefficient hyperparameters. (By default, $\beta = 0.9$.)
Let $\mW_t\in\R^{m\times n}$ be the weight matrix of a given neural network layer at iteration $t$, and let $\mG_t\in\R^{m\times n}$ be its (stochastic) gradient.
Let $\mM_t \in \R^{m\times n}$ be the running momentum estimate of the gradient, where $\mM_0 = \bm{0}$.
The \muon{} update is given by
\begin{align*}
\mM_{t} = \beta \mM_{t-1}+(1-\beta) \mG_{t}, \qquad
    \mW_{t+1}  = \mW_{t}- \lambda \polar(\mM_t). \label{eq:muon}
\end{align*}
Whereas standard stochastic gradient descent (SGD) with momentum updates the weight matrix by taking a step in the direction $-\mM_t$, the \muon method steps in the direction $-\polar(\mM_t)$, where $\polar(\mM)$ denotes the closest semi-orthogonal matrix to $\mM$ \citep[Chapter 8]{highamfunctions}.
Concretely, if $\mM = \mU \mSigma \mV^\T$ is the singular value decomposition (SVD) of $\mM$, then
\begin{equation}\label{eq:matrixsign}
    \polar(\mM) := \mU \mV^\T.
\end{equation}
The matrix $\polar(\mM)$ can be seen as a generalization of the matrix sign function to rectangular matrices \citep{benzi:huang:2019}. Indeed, when $\mM$ is square symmetric with eigendecomposition $\mM = \mV \boldsymbol{\Lambda} \mV^\T$, $\polar(\mM)$ exactly coincides with the matrix sign function $\sign(\mM) = \mV \sign(\boldsymbol{\Lambda}) \mV^\T$
\citep[Chapter 5]{highamfunctions}. Equivalently, $\polar(\bm{M})$ is the left orthogonal factor of the polar decomposition of $\bm{M}$ \citep[Chapter 8]{highamfunctions}.
The motivation for {\muon} is that $-\polar(\mM)$ gives the steepest-descent direction with respect to the \emph{spectral norm}  (instead of the Frobenius norm, as in standard SGD). For analysis and further discussion on \muon{} we refer the reader to \citep{jordan2024muon,bernstein2024oldoptimizernewnorm,pethick2025trainingdeeplearningmodels,riabinin2025gluonmakingmuon,pmlr-v38-carlson15,NIPS2015_f50a6c02}. In this paper, we take the {\muon} update rule as given and focus on the problem of efficiently computing the polar decomposition $\polar(\mM)$.


\subsection{Computing the Polar Factor}
\label{sec:intro_computing_polar}
Although $\polar(\mM)$ can be computed directly via an SVD in $O(mn\min(m,n))$ time, doing so is prohibitively expensive in deep learning applications, especially as standard SVD algorithms fail to take full advantage of the parallelism available on GPUs. There has been significant work on highly-parallel methods for the SVD, but the most common approaches actually require computing the matrix-sign function as a subroutine \citep{nakatsukasa2016computing,nakatsukasa2013stable}.
Numerical analysts have spent decades developing iterative methods for computing $\polar(\bm{M})$.
This rich line of work includes Newton-Schulz \citep[Chapter 8]{highamfunctions}, Padé iteration \citep{kenney1991rational,higham1986computing}, the Newton and scaled Newton iterations \citep[Chapter 8]{highamfunctions}, the \rev{QDWH} iteration \citep{doi:10.1137/090774999,nakatsukasa2013stable}, and \emph{Zolo-pd}~\citep{nakatsukasa2016computing}. 
Unfortunately, as discussed in \Cref{sec:related work}, most of these methods are based on rational approximations to the function $\sign(x)$ and require computing matrix inverses or QR decompositions.
Such methods are ill-suited to GPU acceleration and deep learning applications.
In contrast, the older Newton-Schulz method is based on \emph{polynomial} approximation of $\sign(x)$ and uses only matrix-matrix products.
Thus, {\muon} initially used Newton-Schulz \citep{bernstein2024modulardualitydeeplearning}. Indeed, {\muon} stands for ``MomentUm Orthogonalized by Newton-Schulz''~\citep{jordan2024muon}. For a more comprehensive discussion on prior work, see \Cref{sec:related work}.

\paragraph{The Newton-Schulz methods.}
Newton-Schulz constructs a sequence of approximations $\mX_t \approx \polar(\mM)$ as follows:
\begin{align}\label{eq:NS3}
&\mX_0 = \mM/\|\mM\|_\F, &\mX_{t+1} = \frac32 \mX_t - \frac12 \mX_t \mX_t^\top \mX_t.
\end{align}
At each iteration, this rule effectively applies the cubic polynomial $p(x) = \frac32 x - \frac12 x^3$ to each singular value of $\mX_t$.
The scalar fixed-point iteration $x_{t+1} = p(x_t)$ converges to $\sign(x_0)$ as $t \to \infty$, provided $|x_0| \leq 1$. 
As a result, the matrix iteration satisfies $\lim\limits_{t\to\infty} \mX_t = \mU \mV^\top = \polar(\bm{X}_0)$.
Higher-degree versions of Newton-Schulz follow the same principle. For example, the degree-5 polynomial $p(x) = (15x - 10 x^3 + 3 x^5)/8$ converges even faster. The Newton-Schulz iterations converge super-exponentially when $\mX_t$ is sufficiently close to $\polar(\mM)$, but they suffer from slow initial convergence; when $\mX_0$ is far from $\polar(\mM)$, the approximation improves slowly over the first few iterations. 
Due to the slow initial convergence of Newton-Schulz, \citet{chen2014stable} developed a version of the Newton-Schulz iteration, which adapts the polynomial at each iteration. The resulting method achieves a faster initial convergence, while retaining super-exponential convergence in later iterations. \texttt{Polar Express} is inspired by their method. 

\paragraph{The Jordan and You methods.}
In {\muon}, high accuracy approximations to $\polar(\bm{M})$ are usually not necessary. The primary goal is instead to compute a coarse approximation in as few iterations as possible. To accelerate convergence in the low-accuracy regime, Jordan recently proposed a fixed-point iteration based on the polynomial $p(x)=3.4445x - 4.7750x^3 + 2.0315x^5$, which was found using a heuristic numerical search \citep{jordan2024muon}.
Unlike Newton-Schulz, the scheme that Jordan proposed does not converge to $\polar(\bm{M})$, but plateaus at an error of $\approx 0.3$. However, it reaches this level of accuracy rapidly and outperforms the Newton-Schulz when only a small number of iterations are performed.
Building on this idea, You proposed a method that applies six different polynomial updates in succession, which were again found by heuristic search.
This method achieves better accuracy than Jordan's but still fails to converge ~\citep{cesista2025muonoptcoeffs}.


\subsection{Contributions}
We present \texttt{Polar Express}
(\Cref{alg:polarexpress-gen}), an iterative method for approximating $\polar(\bm{M})$.
Our method dynamically adapts the polynomial update rule at each iteration, prioritizing rapid progress in the initial stage and high accuracy in the later stage.
\texttt{Polar Express} constructs polynomials $p_1, \ldots, p_T$ so that the resulting composition is the optimal approximation to the sign function with respect to the supremum ($L^{\infty}$) norm (\Cref{theorem:greedy}).
By iteratively applying these polynomials to $\mM$, \texttt{Polar Express} computes an approximation to $\polar(\bm{M})$ that is optimal in the worst-case. 
Our method converges to $\polar(\bm{M})$ super-exponentially (\Cref{theorem:convergence}), and it quickly reaches a good approximation within just five to ten iterations.
This early-stage acceleration is especially valuable in deep learning applications, where runtime efficiency takes precedence over high accuracy.
In contrast, classical methods like Newton-Schulz suffer from a slow initial convergence, while recent heuristic proposals~\citep{jordan2024muon,cesista2025muonoptcoeffs} fail to converge.
Our method is efficient to run on GPUs, using only a few matrix-matrix products per iteration.
We give an explicit instantiation of \texttt{Polar Express} in~\Cref{alg:polar-express}, which incorporates minor modifications to make it compatible with half-precision arithmetic (see~\Cref{sec:finite_prec}).
\Cref{alg:polar-express} is very short and easy to use, with no dependencies except PyTorch. It serves as a drop-in replacement for previous methods.
In numerical experiments, \texttt{Polar Express} outperforms previous methods on synthetic matrices and gradient matrices from a GPT-2 transformer (\Cref{fig:convergence}).
We demonstrate the effectiveness of using \texttt{Polar Express} within the \texttt{Muon} optimizer in \Cref{fig:fineweb1B}, showing that it consistently improves the training of GPT-2 language models on 1 billion tokens of the FineWeb dataset~\citep{aroca2023fineweb}.
\rev{Our method has been adopted into the NanoGPT speedrun~\citep{modded_nanogpt_2024}, a heavily optimized implementation that serves as a benchmark for LLM training efficiency.}

\paragraph{Notation.}
We let $\|\mM\|_\F$ and $\|\mM\|_2$ denote the Frobenius norm and spectral norm (largest singular value) of a matrix $\mM$, respectively. We denote the spectrum (set of singular values) by $\sigma(\mM)$. Let $\mathbb{P}_d$ be the set of polynomials of degree at most $d$. For odd $d$, $\mathbb{P}_{d}^{\odd}$ denotes the set of polynomials of degree at most $d$ containing only odd-degree monomials. For a polynomial $p$, $\deg(p)$ is its degree. Let $\sign(x)$ be the scalar sign function, which satisfies $\sign(0) = 0$, $\sign(x) = 1$ if $x > 0$ and $\sign(x) = -1$ if $x < 0$. For a polynomial $p \in \mathbb{P}_{d}^{\odd}$ and a matrix $\mM$ with rank reduced SVD given by $\mM = \bm{U} \bm{\Sigma} \bm{V}^\T$ and positive singular values $\sigma_1 \geq \cdots \geq \sigma_{\rank(\mM)} >0$, we define $p(\mM) := \bm{U} p(\bm{\Sigma})\bm{V}^\T$, where $p(\bm{\Sigma})$ is the diagonal matrix with diagonal entries $p(\sigma_i)$ for $i = 1,\ldots,\rank(\mM)$.

\section{Approximations by Compositions of Polynomials}
To design a GPU-friendly method for computing $\polar(\bm{M})$, we limit ourselves to the following GPU-friendly operations: (i) linear combinations of matrices (given scalars $\beta, \gamma \in \R$ and matrices $\bm{B}$ and $\bm{C}$, compute $\beta \bm{B} + \gamma \bm{C}$) and (ii) matrix-matrix products (compute $\bm{B} \bm{C}$).
While both these computational primitives are well-suited for parallel computing environments, matrix-matrix products come at a higher computational cost than linear combinations. Therefore,
our method attempts to minimize the number of matrix-matrix products. 
A key observation is that we can compute \emph{odd} monomials of $\mM = \mU \mSigma \mV^\T$ using the following formula: $\mM^{2q+1} := \mU \mSigma^{2q+1} \mV^\T = \mM(\mM^\T \mM)^{q}.$\footnote{\rev{For non-symmetric matrices, e.g. rectangular matrices, we cannot compute even polynomials of the singular values without first explicitly computing the SVD. We are therefore restricted to odd polynomials.}}
Hence, for an odd polynomial $p(x) = a_0 x + a_1 x^3 + \cdots + a_qx^{2q+1}$ we can compute
\begin{equation*}
    p(\mM) := a_0\mM + a_1 \mM(\mM^\T \mM) + \cdots + a_q \mM(\mM^\T \mM)^{q}.
\end{equation*}
It has been shown that for an arbitrary polynomial $p$, one requires $\Theta(\deg(p)^{1/2})$ products to compute $p(\mM)$ \citep{paterson1973number}; see also \citet{jarlebring2025polynomial} for related work.
This compares favorably to the naive approach that forms all monomials in $p$ and then sums them together, which requires $\Omega(\deg(p))$ products.
However, if $p$ can be expressed as a composition of $T$ polynomials, each of degree $d$
\begin{equation}\label{eq:composition}
    p = p_T \circ p_{T-1} \circ \cdots \circ p_1,
\end{equation}
then the degree of $p$ is $d^T$, and $p(\mM)$ can be efficiently computed recursively by
\begin{equation}\label{eq:iteration}
    \bm{X}_0 = \mM, \quad \bm{X}_{t} = p_t(\bm{X}_{t-1}) \text{ for } t = 1,2,\ldots,T.
\end{equation}
The final iterate is $\bm{X}_T = p(\mM)$, which we compute with just $O(Td)$ matrix-matrix products. Iterative methods for $\polar(\mM)$ can be seen in this light.
For instance, the degree-5 Newton-Schulz method uses the polynomial update $p_t(x) = \frac{15}{8}x-\frac{10}{8}x^3 + \frac{3}{8}x^5$ for each $t = 1,\ldots,T$.
The composition $p = p_T \circ \cdots \circ p_1$ approximates $\sign(x)$, and the approximation error goes to $0$ as $T$ grows.
In this paper, we ask the following question: what choice of $p_T \circ \cdots \circ p_1$ gives the \emph{best} approximation to $\sign(x)$?

The method we will present is optimal in the following sense: given lower and upper bounds $\ell$ and $u$ on the singular values of $\mM$, an odd degree $d \in \N$, and the number of iterations $T \in \N$, our method computes the composition $p^{\star}(\mM)$ that minimizes the worst-case error in the spectral norm. That is,
\begin{equation}\label{eq:matrix_minimax_problem}
p^{\star} = \argmin_{\substack{p = p_T \circ p_{T-1} \circ \cdots \circ p_1 \\ p_t \in \mathbb{P}_{d}^{\odd}}}
\max_{\substack{\mM \in \R^{m \times n} \\ \sigma(\mM) \subset [\ell, u]}}
\left\|\polar(\mM)-p(\mM)\right\|_2.
\end{equation}
 Given that $ \polar(\mM) -p(\mM) = \mU (\mI-p(\boldsymbol{\Sigma}) )\mV^\T$,  and by the unitary invariance of the spectral norm, we have that \eqref{eq:matrix_minimax_problem} is equivalent to\footnote{For completeness, the equivalence between~\eqref{eq:matrix_minimax_problem} and~\eqref{eq:scalar_minimax_problem} is proven in \Cref{section:verification}.}
\begin{equation}\label{eq:scalar_minimax_problem}
p^{\star} \; = \; \argmin_{\substack{p = p_T \circ p_{T-1} \circ \cdots \circ p_1 \\ p_t \in \mathbb{P}_d^{\odd}}} \,
\max_{x \in [\ell, u]}
\left|1-p(x)\right|.
\end{equation}

\begin{figure}
    \centering
        \includegraphics[width=\linewidth]{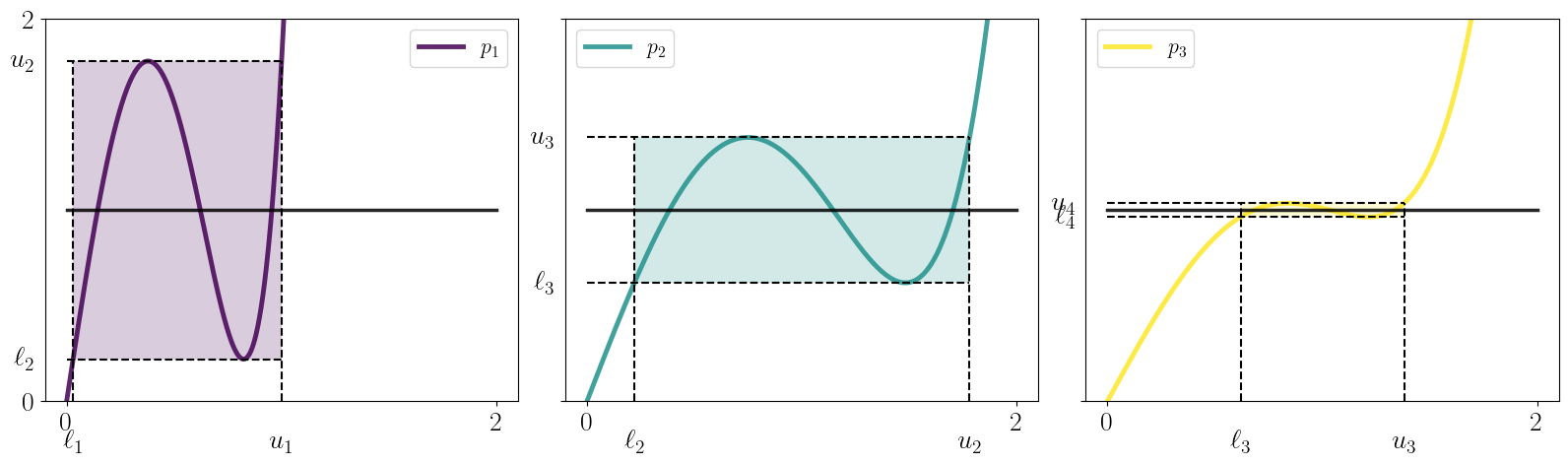}
        \caption{
        The evolution of the first three optimal polynomials $p_1$, $p_2$, and $p_3$ and the corresponding lower bounds $\ell_{t+1} = p_t(\ell_t)$ and upper bounds $u_{t+1} = 2 - \ell_{t+1}$, as described in \Cref{theorem:greedy}. The horizontal black line shows $y = 1$. The polynomial degree is $d = 5$. We set $\ell_1 = 0.03$ and $u_1 = 1$. 
        }
    
    \label{fig:goodpolynomials}
\end{figure}
In other words, the problem given in \eqref{eq:matrix_minimax_problem} reduces to that of finding a ``uniform'' approximation to the constant function $x \mapsto 1$ over the interval $[\ell, u]$, as given in~\eqref{eq:scalar_minimax_problem}.
Uniform approximation on an interval by polynomials or rational functions of a given degree is a central topic in approximation theory \citep{trefethen2019approximation}. 
Here, we seek an approximation of a particular form---a \emph{composition} of odd polynomials of fixed degrees.
In the next section, we solve the optimization problem of~\eqref{eq:scalar_minimax_problem} and use the solution to create \texttt{Polar Express}. 

\section{The Polar Express}
\label{sec:optimal_polynomials}
\subsection{Greedy is optimal}
The key observation is that the polynomial used in each iteration can be chosen greedily, given the choice of polynomials from the previous iterations. 
For the first iteration, we choose $p_1$ so as to map the interval $[\ell,u]$ as close to $1$ as possible.
That is, it minimizes $\max_{x \in [\ell, u]} |1 - p_1(x)|$.
The image of $p_1$ will be a new interval $[\ell_2, u_2]$, where
\begin{equation}
\ell_2 = \min_{x \in [\ell, u]} p_1(x) \qquad \qquad u_2 = \max_{x \in [\ell, u]} p_1(x)
\end{equation}
We now pick $p_2$ to map the interval $[\ell_2, u_2]$ as close to $1$ as possible, obtaining a new interval $[\ell_3, u_3]$ that is the image of $[\ell, u]$ through $p_2 \circ p_1$.
We continue this process for as many iterations as desired. 

The following theorem guarantees that this process finds the solution to \eqref{eq:scalar_minimax_problem}, and thereby also \eqref{eq:matrix_minimax_problem}. 
The scheme is also outlined in \Cref{fig:goodpolynomials}, which demonstrates the evolution of the lower bounds $\ell_t$, the upper bounds $u_t$, and the polynomials $p_t$ across iterations.
The proof is in \Cref{section:optimalcomp}.
\begin{restatable}{theorem}{greedy}\label{theorem:greedy}
Let $d$ be odd and define $\ell_1 = \ell$ and $u_1 = u$. For $t = 1,\ldots,T$ define
\begin{align*}
p_t = \; \argmin_{\substack{p \in \mathbb{P}_d^{\odd}}} \, \max_{x \in [\ell_t, u_t]} |1-p(x)| \numberthis \label{eq:greedy}, \quad \ell_{t+1} =\; \min_{x \in [\ell_t, u_t]} p_t(x), \quad u_{t+1} = \;\max_{x \in [\ell_t, u_t]} p_t(x)
\end{align*}
The resulting composition $p^{\star}:=p_T \circ p_{T-1} \circ \cdots \circ p_1$ is optimal and the error is given by:
\begin{equation}\label{eq:optimal}
\max\limits_{x \in [\ell,u]}|1-p^{\star}(x)| \quad =\quad \min_{\substack{p = p_T \circ p_{T-1} \circ \cdots \circ p_1 \\ p_t \in \mathbb{P}_d^{\odd}}} \,
\max_{x \in [\ell, u]}
\left|1-p(x)\right| = 1-\ell_{T+1}.
\end{equation}
Furthermore the new error, lower and upper bounds  can be computed through
\begin{equation}\label{eq:newbounds}
    \ell_{t+1} = p_t(\ell_t),  \quad u_{t+1} = 2-\ell_{t+1},\quad \text{ and }\quad \max\limits_{x \in [\ell_t,u_t]} |1-p_t(x)| = 1-\ell_{t+1}.
\end{equation}
\end{restatable}

\begin{remark}[Why a fixed degree?]
We note that choice of the degree of each $p_1, p_2, \ldots, p_{T}$ need not be the same for \Cref{theorem:greedy} to hold. More generally, one may specify a sequence of degrees $d_1,\ldots,d_T$ and define each $p_t$ as $p_t = \argmin_{\substack{p \in \mathbb{P}_{d_t}^{\odd}}} \, \max_{x \in [\ell_t, u_t]} |p(x) - 1|$ for $t = 1,\ldots, T.$
However, \citet[Table 2]{lee2021minimax} supports setting $d_t = 5$, as we do.
\end{remark}


Fortunately, \eqref{eq:newbounds} shows that once $p_t$ has been found, we can compute the new lower and upper bounds $\ell_{t+1}$ and $u_{t+1}$ 
simply by evaluating $p_t(\ell_t)$. Hence, for any \emph{fixed} upper and lower bounds on the singular values of $\mM$, we can \emph{precompute} all the polynomials $p_1,\ldots,p_T$ and the bounds $[\ell_1, u_1],\ldots,[\ell_{T+1},u_{T+1}]$.
Then, applying the iterative procedure of \eqref{eq:iteration}, the final iterate $\mX_T$ will satisfy the following error bound:
\begin{equation}\label{eq:error}
\|\polar(\mM) - \bm{X}_{T}\|_2
= \|\polar(\mM) - p^{\star}(\mM)\|_2
\leq 1-\ell_{T+1}.
\end{equation}

From the optimality guarantee of \Cref{theorem:greedy}, we know that our method converges at least as fast as the Newton-Schulz iteration of the same degree.
Combining this fact with an existing analysis of Newton-Schulz, we immediately get the following convergence guarantee showing that our method enjoys faster than exponential convergence. The proof can be found in \Cref{sec:proof_convergence}.

\begin{restatable}{theorem}{convergence}\label{theorem:convergence}
    Let $\mM$ be a matrix normalized so that $\sigma(\mM) \subset [\ell,1]$. Let $\bm{X}_T = p^{\star}(\bm{M})$, where $p^{\star}$ is the polynomial from \Cref{theorem:greedy} with $d = 2q+1$. Then, we have
    \begin{equation}\label{eq:ns_convergence}
        \|\polar(\mM) - \bm{X}_T\|_2 \leq |1-\ell^2|^{(q+1)^T}.
    \end{equation}
    Hence, for $d = 3$ and $d=5$ the method converges quadratically and cubically, respectively.
\end{restatable}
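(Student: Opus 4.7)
The plan is to leverage the optimality statement of Theorem \ref{theorem:greedy} to reduce the convergence analysis to that of the classical Newton--Schulz iteration, for which a matching bound is already known in the numerical analysis literature.

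First, I would observe that the degree-$(2q+1)$ Newton--Schulz polynomial $p_{\textrm{NS}}$ lies in $\mathbb{P}_d^{\odd}$, so its $T$-fold self-composition is a feasible candidate in the minimax problem solved by $p^{\star}$. Combining Theorem \ref{theorem:greedy} with the unitary invariance of the spectral norm then gives
\begin{equation*}
\|\polar(\mM) - \bm{X}_T\|_2 \;=\; \max_{x \in \sigma(\mM)} |1 - p^{\star}(x)| \;\leq\; \max_{x \in [\ell,1]} |1 - p^{\star}(x)| \;\leq\; \max_{x \in [\ell,1]} \bigl|1 - p_{\textrm{NS}}^{\circ T}(x)\bigr|.
\end{equation*}
Thus it suffices to bound the worst-case error of $T$ vanilla Newton--Schulz iterations of degree $2q+1$ on the interval $[\ell, 1]$.

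Next, I would invoke the standard convergence analysis for the polynomial Padé family (see, e.g., \cite[Ch.~8]{highamfunctions} and \cite{doi:10.1137/0708036}). The key fact is that $p_{\textrm{NS}}$, characterized as the unique odd polynomial of degree $2q+1$ agreeing with $\sign(x)$ to order $q$ at $\pm 1$, admits the factorization
\begin{equation*}
1 - p_{\textrm{NS}}(x)^2 \;=\; (1 - x^2)^{q+1}\, r(x^2),
\end{equation*}
for some polynomial $r$ satisfying $0 \leq r(y) \leq 1$ on $y \in [0,1]$ (for $q=1$ one checks directly that $r(y) = (4-y)/4$). Setting $x_0 = x$ and $x_{t+1} = p_{\textrm{NS}}(x_t)$ and defining $e_t := 1 - x_t^2$, this yields the scalar recursion $e_{t+1} \leq e_t^{q+1}$ on $[0,1]$, which unrolls to $e_T \leq e_0^{(q+1)^T} \leq (1-\ell^2)^{(q+1)^T}$.

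Finally, since $p_{\textrm{NS}}$ maps $[0,1]$ into $[0,1]$, every iterate $x_t$ lies in $[0,1]$, so $1 - x_T \leq (1-x_T)(1+x_T) = e_T \leq (1-\ell^2)^{(q+1)^T}$. Chained with the first display, this gives \eqref{eq:ns_convergence}. Specializing to $q=1$ (so $d=3$) recovers the quadratic exponent $2^T$, and $q=2$ (so $d=5$) recovers the cubic exponent $3^T$, establishing both stated consequences. The main obstacle is the factorization identity together with the bound $r \leq 1$; for the degrees of interest ($d=3,5$) this is a one-line computation, while for general odd $d$ it is a classical fact about the polynomial Padé approximants that I would cite rather than re-derive.
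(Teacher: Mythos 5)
Your proposal is correct and follows essentially the same route as the paper: both use the optimality guarantee of \Cref{theorem:greedy} to upper-bound the error of $p^{\star}$ by that of the $T$-fold composition of the degree-$(2q+1)$ polynomial Pad\'e (Newton--Schulz) iteration, and then invoke the classical convergence bound for that family. The only difference is presentational --- the paper cites \cite[Theorem 3.1]{kenney1991rational} for the bound $|1-f(x)| \leq |1-x^2|^{(q+1)^T}$, whereas you unpack the same fact via the factorization $1 - p_{\mathrm{NS}}(x)^2 = (1-x^2)^{q+1} r(x^2)$ and the resulting recursion $e_{t+1} \leq e_t^{q+1}$.
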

In fact, our method is strictly faster than Newton-Schulz, even if $\sigma_{\min}(\mM) < \ell$. When $\sigma_{\min} = \ell$, \texttt{Polar Express} is about twice as fast as Newton-Schulz (cf. \citet[Section 3.1]{chen2014stable}). \rev{Recent work has analyzed the stability and convergence of \texttt{Muon} when the polar factor is computed inexactly \citep{shulgin2025beyond,sumo}. Combining these analyses with \Cref{theorem:convergence} immediately yields a convergence guarantee for \texttt{Muon} as implemented with \texttt{Polar Express}.}


\subsection{Finding the optimal polynomial for each iteration}
\label{sec:remez_short_version}
\Cref{theorem:greedy} shows that we can solve \eqref{eq:scalar_minimax_problem} by greedily choosing the optimal approximation $p_t \in \mathbb{P}_d^{\odd}$ for each interval $[\ell_t,u_t]$ for $t = 1,\ldots,T$.
In this section, we show how to find each $p_t$.
Since we are now focused on just one iteration, we drop the subscripts. Given $\ell$ and $u$, we wish to solve the following optimization problem:
\begin{equation}\label{eq:remez_goal}
\argmin_{\substack{p \in \mathbb{P}_d^{\odd}}} \, \max_{x \in [\ell, u]} |1-p(x)|
\end{equation}
That is, we seek a minimax or uniform approximation of the function $x \mapsto 1$ on $[\ell, u]$ from the set of odd polynomials. (Equivalently, we seek a minimax optimal approximation to $\sign(x)$ on $[-u, -\ell] \cup [\ell, u]$.) Problems of this form are well-studied in approximation theory and numerical analysis.
The key mathematical insight underlying their solution is the Equioscillation Theorem, which we state formally for our setting in \Cref{theorem:approxtheorystuff}.
This theorem is the basis of the Remez algorithm \citep{pachon2009barycentric,parks1972chebyshev}, a general-purpose method that finds a (nearly) optimal polynomial approximation of a given degree to \emph{any} function on any interval. With a very minor modification to handle the constraint that $p$ be odd, Remez can solve \eqref{eq:remez_goal}. 

However, the Remez algorithm is complicated and notoriously difficult to implement correctly.\footnote{For implementations of the general Remez algorithm, we recommend \href{https://www.chebfun.org/examples/approx/BestApprox.html}{Chebfun} or \href{https://github.com/samhocevar/lolremez}{\texttt{lolremez}}.}
Fortunately, we do not need the algorithm in its full generality; we seek only low-degree polynomial approximations, and the function we wish to approximate is just $f(x) = 1$.
We use the Equioscillation Theorem to derive \eqref{eq:deg3_solution}, an explicit, closed-form solution to \eqref{eq:remez_goal} for the degree $d=3$ case. Up to rescaling, this turns out to be the same polynomial derived by different means in \citet{chen2014stable}.
For $d = 5$, we present \Cref{alg:remezdeg5}, a simpler way of solving \eqref{eq:remez_goal} that is mathematically equivalent to Remez in our setting. This algorithm is implemented in its entirety in \Cref{app:code}. For more details, we refer the reader to \Cref{section:remez}.

\subsection{Upper and lower bounds on the singular values}
To instantiate our method, we need upper and lower bounds $u$ and $\ell$ on the singular values of the input matrix $\mM$.
A trivial upper bound is given by $\|\mM\|_\F$.
This can be quite loose in the worst case.
In practice, it is off only by a small constant factor because the gradient matrices of the weights of dense linear layers in neural networks tend to have small effective rank~\citep{yang2024spectralconditionfeaturelearning}.
We therefore rescale $\mM$ by $\|\mM\|_\F$ and set $u=1$.
It is difficult to efficiently find a good lower bound on $\sigma_{\min}$, so we are forced to guess. Fortunately, the consequences of a bad guess are not severe.
The method converges for any $\ell \in (0, u]$, and even an order of magnitude error only delays convergence by a few iterations.
For matrices stored in floating point arithmetic, the singular values are usually larger than machine precision $\epsilon_{\text{mach}}$ \citep{MR4784082}.
We work in \texttt{bfloat16}, which has $\epsilon_{\text{mach}}= 2^{-8} \approx 3.91 \cdot 10^{-3}$, so we set $\ell =10^{-3}$.
Since we use these bounds for all input matrices, we can pre-compute the optimal polynomials once and apply them to as many inputs as we want.

\subsection{Finite precision considerations}
\label{sec:finite_prec}
When working in finite-precision arithmetic, especially the half-precision \texttt{bfloat16} format used in deep learning, we must take some care to avoid blowups and other problems due to numerical error.
To this end, we make a few small but crucial changes to the method in the offline stage that stabilize it with a negligible effect on accuracy.
One issue arises when numerical round-off creates singular values that are slightly larger than our current upper bound $u_t$. To fix it, we replace each polynomial $p_t$ by $x \mapsto p_t(x/1.01)$, effectively increasing $u_t$.
Another issue, identified by \citet{nakatsukasa2013stable}, is due to the non-monotonicity of $p_t$.
We address it by using slightly suboptimal (but less oscillatory) polynomials in the early iterations, as suggested by \citet{chen2014stable}.
For a detailed discussion on the finite precision considerations, we refer to \Cref{sec:finite_prec_app}.

\subsection{The algorithm}


\makeatletter
\algrenewcommand\alglinenumber[1]{\textcolor{gray!70}{\footnotesize#1}}
\algrenewcommand\algorithmiccomment[1]{\hfill\textcolor{gray!70}{\scriptsize$\triangleright$~#1}}
\makeatother

\newcommand{\AlgIO}[2]{\noindent\textbf{#1:}~#2\par}

\definecolor{OfflineC}{HTML}{1F77B4} 
\definecolor{OnlineC}{HTML}{2CA02C} 

\newcommand{\AlgBox}[4]{%
\begin{tikzpicture}[remember picture,overlay]
\node[
draw=#3, rounded corners, ultra thick, inner sep=6pt,
fit={( $ (pic cs:#1) + (0.4em,0.9ex) $ ) ( $ (pic cs:#2) + (0,-0.1ex) $ )}
] (box-#1) {};
\node[
anchor=south west, yshift=2pt, xshift=4pt,
fill=#3!10, draw=#3, rounded corners=2pt, inner xsep=4pt, inner ysep=1pt,
font=\scriptsize\bfseries
] at (box-#1.north west) {#4};
\end{tikzpicture}%
}

\begin{minipage}[t]{0.56\textwidth}
\vspace{-0.5cm}
\begin{algorithm}[H]
\caption{The General \texttt{Polar Express}}\label{alg:polarexpress-gen}
\AlgIO{input}{Matrix $\mM$, iteration count $T$, degree $d$, approximate lower bound $\ell$.}
\AlgIO{output}{An approximation $\bm{X}_T$ to $\operatorname{polar}(\mM)$.} \vspace{0.2cm}
\begin{algorithmic}[1]
\State
\State \tikzmark{off-beg} 
$\ell_1 = \ell$, $u_1 = 1$.
\For{$t = 1,2,\ldots,T$}
\State Solve using Remez (\Cref{section:remez}):
\Statex\hspace{\algorithmicindent}$ p_t = \argmin\limits_{p \in \mathbb{P}_d^{\odd}}\max\limits_{x \in \left[\max(\ell_t, u_t/10),\, u_t \right]} |1-p(x)|$ \label{ln:offline-polynomial-comp}
\State $p_t \gets p_t(\cdot / 1.01)$ \label{line:numerical_1}
\State $\ell_{t+1} \gets p_t(\ell_t)$,\quad $u_{t+1} \gets 2-\ell_{t+1}$ 
\EndFor  \hspace{5.5cm}\tikzmark{off-end}
\State
\State
\State \tikzmark{on-beg} 
Set $\bm{X}_0 = \mM/(\|\mM\|_\F + 10^{-2})$.\label{line:numerical_2}
\For{$t = 1,2,\ldots,T$}
    \State $\bm{X}_{t} = p_t(\bm{X}_{t-1}) $ 
\EndFor 
\State \textbf{return} $\bm{X}_T$. \hspace{4.78cm}\tikzmark{on-end}
\end{algorithmic}
\AlgBox{off-beg}{off-end}{OfflineC}{Offline: precompute polynomials in \texttt{float64}}
\AlgBox{on-beg}{on-end}{OnlineC}{Online: apply precomputed polynomials in \texttt{bfloat16}}
\end{algorithm}

\end{minipage}
\hspace{0.02\textwidth}
\begin{minipage}[t]{0.42\textwidth}
We give the pseudocode of our proposed method for any degree in~\Cref{alg:polarexpress-gen}. 
We give the specific Python code of the \texttt{Polar Express} with degree $d=5$ and $\ell = 10^{-3}$ used in our GPT experiments in~\Cref{alg:polar-express,app:code} in \Cref{app:code_polar}. Both incorporate the finite precision considerations discussed in \Cref{sec:finite_prec}.
Our algorithm  precomputes the polynomials $p_1,\ldots,p_{T}$ of \Cref{theorem:greedy} in full precision using the results of \Cref{sec:remez_short_version} (or the Remez algorithm for $d > 5$).
This stage is offline because the coefficients of the polynomials are only computed and stored once. 
For every subsequent call to the algorithm, these coefficients are reused and the offline stage is skipped. For instance, in \Cref{alg:polar-express} these polynomials have been precomputed and stored in the variable \texttt{coeffs\_list}. 
\end{minipage}

The online stage can be performed in lower precision (\texttt{bfloat16}) for greater speed on a GPU. 
Horner's rule can be used to carry out each iteration. For instance, if $p_t = ax + bx^3 + cx^5$, then $\mX_t = \mX_{t-1}\left(a\mI + \mY_{t-1} \left(b\mI + c\mY_{t-1}\right)\right)$
where $\mY_{t-1} = \mX_{t-1}^\top\mX_{t-1}$. A simple implementation of the offline stage of \Cref{alg:polarexpress-gen} is given in \Cref{app:code}.
For deep learning applications, we recommend using $d = 5$ and $T = 5$ or $6$ with $\ell_1 = 10^{-3}$. 
With these parameters, the offline stage as implemented in \Cref{app:code} gives the polynomials 
 encoded in \texttt{coeffs\_list} in \Cref{alg:polar-express}.
All told, our proposal for \muon is to apply the composition of these polynomials to $\mM / (\|\mM\|_F + 10^{-2})$.\footnote{In \Cref{sec:smart_init,sec:fast_apply}, we describe two further algorithmic ideas. They are not used in our {\muon} experiments but they may be beneficial in other settings, and we believe they merit further study.} 

\section{Numerical Experiments}
\subsection{Convergence of \texttt{Polar Express}}\label{sec:synthetic}
We compare \texttt{Polar Express} against degree-5 Newton-Schulz and the methods of \citet{jordan2024muon} and \citet{cesista2025muonoptcoeffs}. 
We first generate a random matrix whose singular values are evenly spaced on a logarithmic scale between $10^{-6}$ and $1$, with singular vectors chosen randomly.
The left panel of \Cref{fig:convergence} shows the results.
Since all the methods in this plot use degree-5 polynomials, their computational and runtime costs are all proportional to the number of iterations.
As expected, Newton-Schulz converges but makes almost no progress for the first 17 iterations. Jordan's method rapidly achieves an error of $\approx 0.3$ after just 11 iterations, but ceases to converge further.
You's method, which is only defined for six iterations, converges at a similar rate as Jordan's method.
When \texttt{Polar Express} is instantiated with $\ell = \sigma_{\min}$, it dominates the other methods at every iteration, achieving excellent accuracy after just 11 iterations and converging about twice as fast as Newton-Schulz to any given error.
Even when $\ell$ is wrong by two orders of magnitude in either direction, the method remains competitive, though it does not outperform Jordan's method until iteration 13 or 14. We also test convergence on a non-synthetic matrix: the gradient of a weight matrix from the fourth transformer block of a GPT-2 model (\Cref{fig:convergence}, right).
Again, the best-tuned version of \texttt{Polar Express} outperforms the other methods, but setting $\ell$ to be many orders of magnitude too small can delay convergence.
Note that \Cref{fig:convergence} measures error in the spectral norm. For many applications we may be satisfied with a looser measure of error; see \Cref{app:further convergence experiments}.
\begin{figure}
    \centering
    \includegraphics[width=\textwidth]{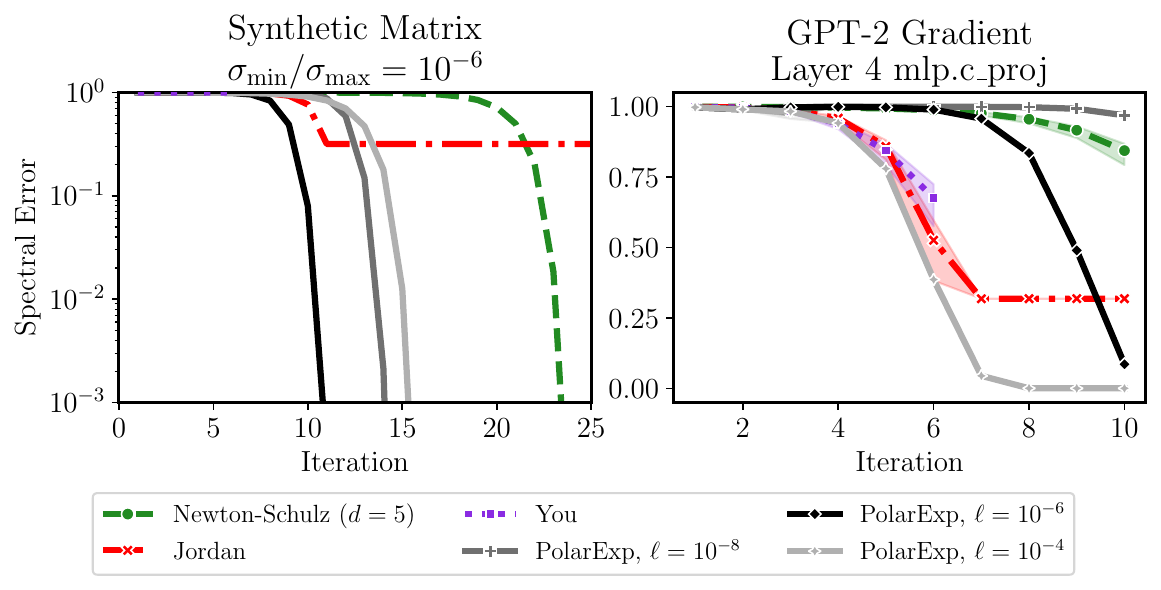}
    \caption{Convergence of degree-5 polynomial methods. Polar Express outperforms other methods at every iteration when tuned properly. Left panel: synthetic matrix with $\sigma_{\max} = 1$, $\sigma_{\min} = 10^{-6}$. Right panel: gradient from randomly-initialized GPT-2 model on a batch of language modeling data.
    \rev{Shaded region shows 90\% interval over 20 batches of data.}
    }
    \label{fig:convergence}
\end{figure}

\subsection{Training GPT-2}
\label{sec:gpt}
We compare the performance of using \texttt{Polar Express} (\Cref{alg:polar-express}) inside \muon{} against Jordan's~\citep{jordan2024muon} and You's~\citep{cesista2025muonoptcoeffs} methods.
We train two architectures: GPT-2-Small ($n_{\text{embd}} = 768, n_{\text{layer}} = 12, n_{\text{head}} = 12$) and GPT-2-Large ($n_{\text{embd}} = 1280, n_{\text{layer}} = 36, n_{\text{head}} = 20$), both with a vocabulary size of $50{,}257$ and a context length of $1024$. We train on 1B tokens of the \rev{FineWeb} dataset~\citep{aroca2023fineweb} for one epoch with batch size 32. All runs use mixed precision (\texttt{bfloat16}) on 4 H100 GPUs with the learning rate schedule proposed in~\citet{modded_nanogpt_2024}---a constant phase for the first 40\% of training steps followed by linear decay.
All methods for the matrix sign computations are performed in \texttt{bfloat16} precision and use five iterations.
Following \texttt{nano-gpt} \citep{modded_nanogpt_2024}, we assign  \muon{} to all parameters with at least two dimensions (e.g., excluding RMS norm parameters), except for embeddings, unembeddings, and positional encodings. These excluded parameters are optimized with AdamW.\footnote{Code for our LLM training experiments is available at \url{https://github.com/modichirag/GPT-opt/tree/polar}, in the \texttt{polar} branch.}

\Cref{fig:fineweb1B,fig:fineweb1B-full} show the resulting in terms of validation loss for the \texttt{GPT-Large} and \texttt{GPT-Small} models, respectively. In both cases, \texttt{muon-PolarExp} achieves a better validation loss than \texttt{muon-Jordan} or \texttt{muon-You}. The advantage is remarkably consistent across all learning rates and epochs. While not shown in \Cref{fig:fineweb1B,fig:fineweb1B-full}, \texttt{muon-PolarExp} also achieves a better training loss than the baselines, and the improvements in training loss are nearly identical to the improvements in validation loss. Furthermore, since all three of these matrix sign methods are equally expensive (they all apply a degree 5 polynomial at each iteration), improved validation loss in terms of training steps also implies improved loss in terms of wall clock time. For figures displaying the improvements in training loss and wall-clock time, see \Cref{app:additional gpt}, \Cref{fig:full grid no wd}.

\begin{figure}
\includegraphics[width=0.54\textwidth]{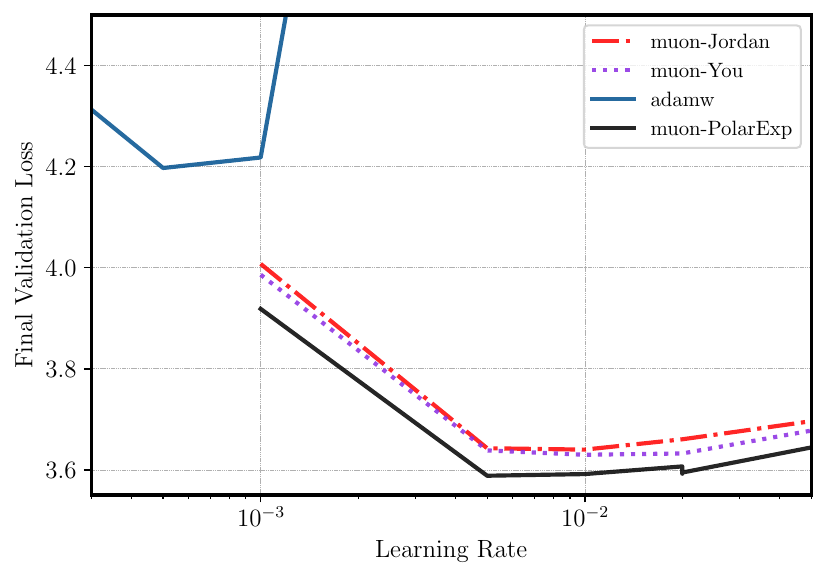}
\hfill
\hspace{0.3cm}
    \includegraphics[width=0.41\textwidth]{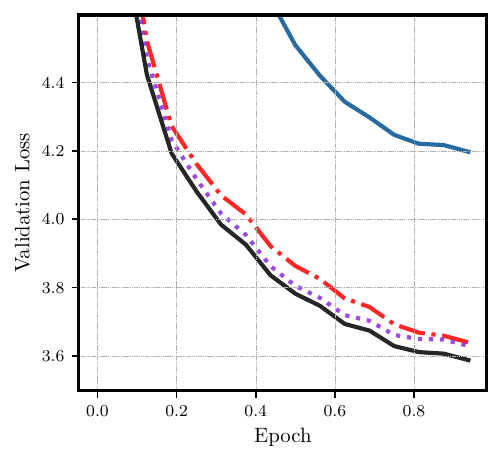} \\
\hfill
\hspace{0.3cm}
    \caption{Training a GPT-2-Small (124M) model on 1 Billion tokens of the \rev{FineWeb} data set~\citep{aroca2023fineweb}. muon-\texttt{<method>} denotes \muon{} with 5 iterations of \texttt{<method>} to compute $\polar(\mM)$. No weight decay is used. Left: final validation loss vs. learning rate. The best final validation losses for each method were \texttt{adamw}(lr =$0.0005$):  $4.197$, \texttt{muon-Jordan}(lr =$0.01$): $3.639$, \texttt{muon-You}(lr =$0.01$): $3.629$ and \texttt{muon-PolarExp}(lr =$0.005$): $3.588$. Right: Validation loss vs. training iteration.
  }
  \label{fig:fineweb1B-full}
\end{figure}

\subsection{Ablations}\label{sec:gpt ablation}
\paragraph{Accuracy of polar approximation} We now explore how the accuracy of approximating $\polar(\mM)$ affects the optimization quality of \muon{}.
Our main experiments with GPT-2 use 5 iterations.
We trained GPT-2 Small with \muon{} using between 2 and 30 iterations of Polar Express instead.
For comparison, we also implemented \muon{} with the \emph{exact} polar factor, computed using \texttt{torch.linalg.svd}.
\Cref{fig:num_iters} shows the results.
The left plot shows that when using only 2 or 3 iterations of \texttt{Polar Express}, the final validation loss is worse than when using 5 or 6 iterations.
However, increasing the accuracy of the polar approximation further---even computing it exactly with the SVD---does not improve the optimization quality.
The right plot shows that changing the number of iterations does not meaningfully change the runtime of \muon{}; in our setting, the runtime of computing $\polar(\mM)$ is dominated by the forward and backward passes.
However, the SVD is so costly that using it \emph{doubles} the runtime of each training step.
These results validate the standard way of implementing \muon{}: using 5 or 6 iterations of an iterative approximation like \texttt{Polar Express} rather than computing $\polar(\mM)$ exactly.
For further experiments supporting this conclusion, see \Cref{app:further convergence experiments}, \Cref{fig:supression}.

\begin{figure}
\centering
\includegraphics[width=\textwidth]{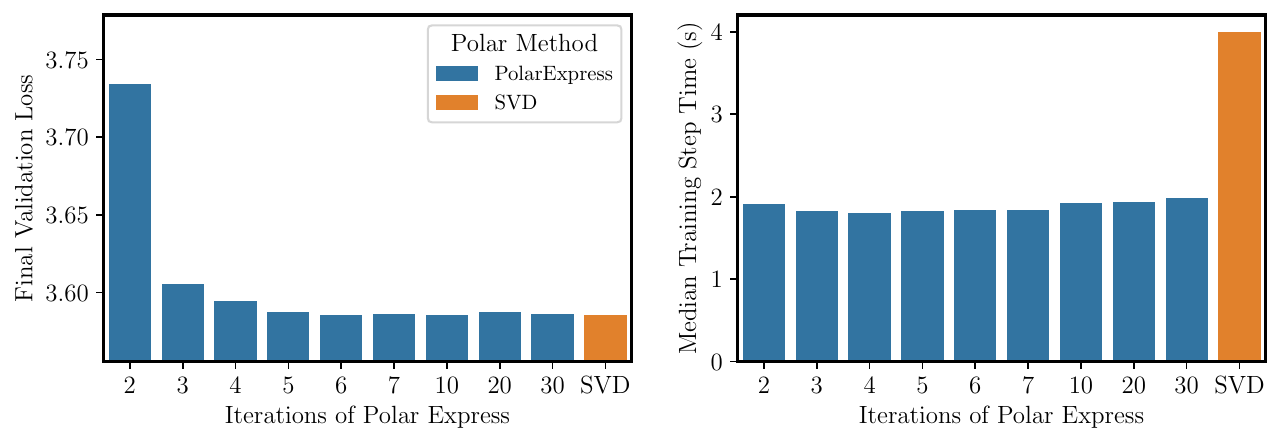}
\caption{\rev{
Ablating the number of iterations of \texttt{Polar Express} used to implement \muon{}, and comparing to computing $\polar(\mM)$ exactly via an SVD.
Left: using $> 6$ iterations or the SVD does not improve final validation loss.
Right: Runtime of \muon{} is not sensitive to the number of iterations of \texttt{Polar Express}, but the SVD makes it significantly slower.
All runs use GPT-2-Small with 1 Billion tokens of FineWeb data, learning rate $0.05$, and weight decay $0.1$.
}}
\label{fig:num_iters}
\end{figure}

\begin{figure}
\includegraphics[width=0.54\textwidth]{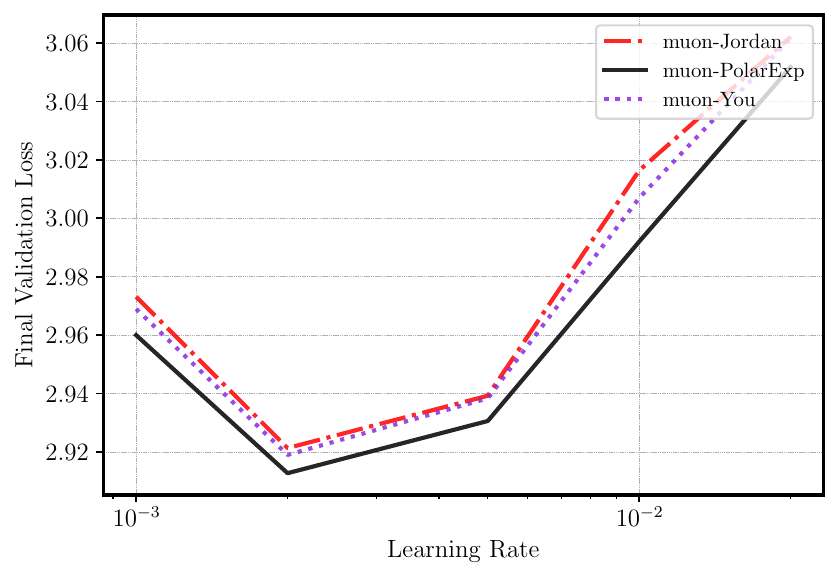}
\hfill
\hspace{0.3cm}
\includegraphics[width=0.41\textwidth]{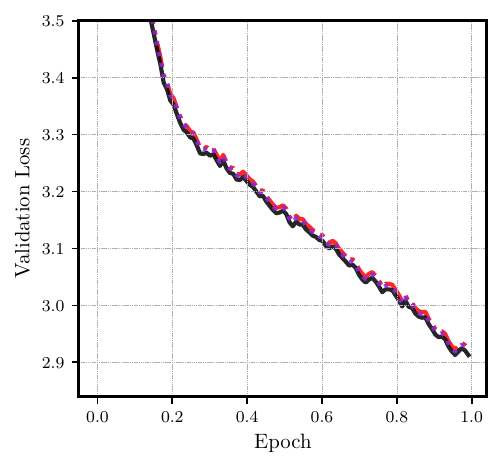}
\hfill
\hspace{0.3cm}
\caption{\rev{Training GPT-2-Large on 10 billion tokens of FineWeb with weight decay 0.1.
Best final validation losses were \texttt{muon-Jordan} (lr = $0.002$): $2.921$, \texttt{muon-You} (lr = $0.002$): $2.919$ and \texttt{muon-PolarExp} (lr = $0.002$): $2.913$.}
}
\label{fig:10bn_run}
\end{figure}

\paragraph{Weight decay} We also experimented with adding weight decay of $0.1$ to the GPT-2 training runs, keeping all else the same. The results are presented in \Cref{app:additional gpt}, ~\Cref{fig:full grid yes wd}. They are quite similar to \Cref{fig:fineweb1B,fig:fineweb1B-full}. We again find that  \texttt{muon-PolarExp} outperforms the other methods.

\paragraph{Number of Training Tokens} Our main experiments with GPT-2 use 1 billion tokens of training data from FineWeb~\citep{aroca2023fineweb}.
We now select a subset of our training runs and extend them to 10 billion tokens. 10 billion tokens roughly matches the Chinchilla scaling rule for GPT-2-Large (774M params) and exceeds it for GPT-2-Small, as per Table 3 in \citet{hoffmann2022trainingcomputeoptimallargelanguage}.
\Cref{fig:10bn_run} shows the results for GPT-2-Large with weight decay. (For GPT-2-Small, see \Cref{app:additional gpt}, \Cref{fig:full grid gpt-small no wd 10b}). \texttt{Polar Express} still outperforms the baselines by a small but consistent margin.

\paragraph{Acknowledgments} This work was partially supported by NSF awards 2045590 and 2234660.
\paragraph{Reproducibility statement} A complete Pytorch implementation of our method is given in \Cref{app:code_polar}. Details of our experiments, including hyperparameters, are given in \Cref{sec:synthetic,sec:gpt}. Source code to reproduce our experiments is given in the supplementary materials and is available at \url{https://github.com/modichirag/GPT-opt/tree/polar}, in the \texttt{polar} branch. Proofs of all theoretical claims can be found in the appendices.

\bibliography{iclr2026_conference}
\bibliographystyle{iclr2026_conference}

\appendix

\tableofcontents

\section{Code for \texorpdfstring{\texttt{Polar Express}}{Polar Express}}\label{app:code_polar}

\Cref{alg:polar-express} gives a Python implementation of the online stage of \Cref{alg:polarexpress-gen} for degree $=5$, which we use in our numerical experiments. It uses hard-coded polynomials generated from \Cref{app:code} and incorporates a numerical safety factor of $1.01$ as described in \Cref{sec:finite_prec}. This implementation is designed for ease of use. It is short, it has no dependencies besides PyTorch, and it is a drop-in replacement for previous implementations of matrix sign methods \citep{cesista2025muonoptcoeffs,jordan2024muon}, such as \citet{modula2024newtonschulz}.\footnote{Code including \Cref{alg:polar-express,app:code} can also be found at \url{https://github.com/NoahAmsel/PolarExpress}.}

\begin{lstlisting}[language=Python,caption={Python code for \texttt{Polar Express} of degree = 5.},captionpos=t,label={alg:polar-express}]
from itertools import repeat
import torch

coeffs_list = [
    (8.28721201814563, -23.595886519098837, 17.300387312530933),
    (4.107059111542203, -2.9478499167379106, 0.5448431082926601),
    (3.9486908534822946, -2.908902115962949, 0.5518191394370137),
    (3.3184196573706015, -2.488488024314874, 0.51004894012372),
    (2.300652019954817, -1.6689039845747493, 0.4188073119525673),
    (1.891301407787398, -1.2679958271945868, 0.37680408948524835),
    (1.8750014808534479, -1.2500016453999487, 0.3750001645474248),
    (1.875, -1.25, 0.375),  # subsequent coeffs equal this numerically
]
# safety factor for numerical stability (but exclude last polynomial)
coeffs_list = [(a / 1.01, b / 1.01**3, c / 1.01**5)
                for (a, b, c) in coeffs_list[:-1]] + [coeffs_list[-1]]

@torch.compile
def PolarExpress(G: torch.Tensor, steps: int) -> torch.Tensor:
    assert G.ndim >= 2
    X = G.bfloat16()  # for speed
    if G.size(-2) > G.size(-1): X = X.mT  # this reduces FLOPs
    X = X / (X.norm(dim=(-2, -1), keepdim=True) * 1.01 +1e-7)
    hs = coeffs_list[:steps] + list( 
        repeat(coeffs_list[-1], steps - len(coeffs_list)))
    for a, b, c in hs:
        A = X @ X.mT
        B = b * A + c * A @ A
        X = a * X + B @ X  # X <- aX + bX^3 + cX^5
    if G.size(-2) > G.size(-1): X = X.mT
    return X
\end{lstlisting}

\Cref{app:code} gives a Python implementation of the offline stage of \Cref{alg:polarexpress-gen}. This code was used to construct the coefficients of the polynomials given in \Cref{alg:polar-express}, which in turn were used in our \muon experiments (\Cref{sec:gpt}). It uses $\ell = 10^{-3}$ and $u=1$ by default.
It incorporates \Cref{alg:remezdeg5} and the finite precision modifications described in \Cref{sec:finite_prec}.

\begin{lstlisting}[language=Python,caption={\texttt{Polar Express}, Offline Stage},captionpos=t,label=app:code]
from math import inf, sqrt
import numpy as np


def optimal_quintic(l, u):
    assert 0 <= l <= u
    if 1 - 5e-6 <= l / u:
        # Above this threshold, the equioscillating polynomials 
        # is numerically equal to...
        return (15/8)/u, (-10/8)/(u**3), (3/8)/(u**5)
    # This initialization becomes exact as l -> u
    q = (3*l + u) / 4
    r = (l + 3*u) / 4
    E, old_E = inf, None
    while not old_E or abs(old_E - E) > 1e-15:
        old_E = E
        LHS = np.array([
            [l, l**3, l**5, 1],
            [q, q**3, q**5, -1],
            [r, r**3, r**5, 1],
            [u, u**3, u**5, -1],
        ])
        a, b, c, E = np.linalg.solve(LHS, np.ones(4))
        q, r = np.sqrt((-3*b + np.array([-1, 1]) * 
                        sqrt(9*b**2 - 20*a*c)) / (10*c))
    return float(a), float(b), float(c)


def optimal_composition(l, num_iters, cushion=0.02407327424182761):
    u = 1
    coefficients = []
    for _ in range(num_iters):
        a, b, c = optimal_quintic(max(l, cushion*u), u)
        # Due to cushioning, this may be centered around 1 with 
        # respect to 0.024*u, u. Recenter it around 1 with respect 
        # to l, u, meaning find c so that 1 - c*p(l) = c*p(u) - 1:
        pl = a*l + b*l**3 + c*l**5
        pu = a*u + b*u**3 + c*u**5
        rescalar = 2/(pl + pu)
        a *= rescalar; b *= rescalar; c *= rescalar
        # Optionally incorporate safety factor here:
        # a /= 1.01; b /= 1.01**3; c /= 1.01**5
        coefficients.append((a, b, c))
        l = a*l + b*l**3 + c*l**5
        u = 2 - l
    return coefficients


print(*optimal_composition(1e-3, 10), sep="\n")

\end{lstlisting}

\section{Related Work}
\label{sec:related work}
Computing $\polar(\mM)$ is an important and longstanding problem in numerical linear algebra, with applications spanning electronic structure calculations, lattice quantum chromodynamics, orthogonal Procrustes analysis, parallel algorithms for computing the SVD, and beyond; see e.g. \citep{higham1986computing,kaneko2012empirical,carroll1998multidimensional,gower2004procrustes,neuberger1998exactly,szabo1996modern}.

\paragraph{Newton-Schulz and polynomial Padé methods.} The earliest methods in the literature are polynomial iterations like \eqref{eq:NS3}. Several nearly simultaneous papers introduced the family of polynomial Padé iterations, comprising Newton-Schulz and its higher-degree analogues \citep{doi:10.1137/0707031,doi:10.1137/0708036,higham1986computing,leipnik1971rapidly}.
These higher-degree methods are also sometimes called ``Newton-Schulz''; when doing so, we will specify the degree for clarity.
In these methods, each iteration refines the current approximation $\mX_t$ by applying a low-degree odd matrix polynomial, where any odd monomial $x \mapsto x^{2q+1}$ is defined for rectangular matrices by the formula $\mX_t \mapsto \mX_t\left( \mX_t^\top \mX_t\right)^q$.
Our \texttt{Polar Express} method also takes this form, though unlike Newton-Schulz, it changes the polynomial at each iteration.

The polynomials used in Padé methods are chosen to match the value and first few derivatives of $\sign(x)$ at the points $x = \pm 1$. For instance, the update rule of the third method in this family is defined by $p(x) = \frac1{16}\left(35x - 35x^3 + 21x^5 - 5x^7\right)$, which is the unique degree-7 polynomial satisfying $p(\pm 1) = \pm 1$ and $p'(\pm1) = p''(\pm 1) = p'''(\pm 1) = 0$. These methods converge so long as all singular values of $\mX_0$ lie in $(0, 1]$, a condition guaranteed by the initialization of \eqref{eq:NS3}.
Furthermore, the order of convergence of the degree $2q+1$ method is $q+1$ \citep{doi:10.1137/0708036}. In particular, the Newton-Schulz method ($q=1$) converges quadratically.

\paragraph{Newton's method and rational Padé.} In the numerical analysis literature, polynomial methods were succeeded  by rational iterations like Newton's method \citep{higham1986computing}, defined as follows\footnote{Our description of Newton's method and other rational methods assumes square non-singular $\mM$. Non-square problems can be reduced to the square case by an initial QR decomposition, but this is not an option for purely polynomial methods like ours.}:
\begin{align}
&\mX_0 = \mM & \mX_{t+1} = \frac12\left(\mX_t + \mX_t^{-\top}\right)
\end{align}
Newton's method also converges quadratically. 
Like Newton-Schulz, it works because the rational function $r(x) = \frac12(x + x^{-1})$ has a stable fixed point at $1$; unlike for Newton-Schulz, this point is a global attractor for the whole positive real line.
At first glance, Newton's method has nothing to do with the Padé iterations discussed above.
However, after a change of variables $\mY_t = \mX_t^{-1}$, it can be reinterpreted as $\mY_{t+1} = 2\mY_t(\mI + \mY_t^\top \mY_t)^{-1}$, which is sometimes called inverse Newton.
Observing that $r(x) = \frac{2x}{1+x^2}$ satisfies $r(\pm 1) = \pm 1$ and $r'(\pm 1) = 0$, we see that (inverse) Newton is also a Padé method, though a rational rather than polynomial one.
In fact, given a odd degree $2q_n+1$ for the numerator and an even degree $2q_d$ for the denominator, there is a unique rational function that matches the value and first $q_n + q_d$ derivatives of $\sign(x)$ at $x = \pm 1$.
This directly yields a Padé method for computing $\polar(\mM)$ whose order of convergence is $q_n + q_d + 1$.
For instance, $r(x) = \frac{3x+x^3}{1+3x^2}$ is called Halley's method, which converges cubically.
When $q_d = 0$, we recover the polynomial Padé methods.

There are two main weakness of Newton's method and the Padé iterations: slow convergence in the initial phase and the need to compute explicit inverses.
To accelerate initial convergence, Higham popularized the technique of rescaling the matrix after every Newton iteration \citep{higham1986computing}.
Intuitively, rescaling $\mX_t$ so that $\sigma_{\max} = 1/\sigma_{\min}$ centers the spectrum around $1$, where convergence is fastest.
Several easily-computable choices of scaling factor exist to accomplish this approximately.
Note that this rescaling scheme would fail for Newton-Schulz, which likewise suffers from slow initial convergence but which would diverge if $\sigma_{\max} \gg 1$.

Computing matrix inverses is difficult to parallelize and to implement stably in low precision arithmetic.
However, a trick was developed for stably computing many rational methods \emph{without} explicit inverses; QR decompositions can be used instead \citep{doi:10.1137/090774999,558af004-ba16-32f4-a500-e04027a48558}.
Applying this trick to Halley's method and combining with a special rescaling scheme yields the QDWH (QR-based dynamically weighted Halley) method, which converges in just six iterations for any reasonably conditioned matrix \citep{doi:10.1137/090774999}.

\paragraph{Adaptive rational methods from optimal approximations.} A landmark 2016 paper introduced a new paradigm to design iterative methods for computing $\polar(\mM)$ \citep{nakatsukasa2016computing}. The main insight is as follows.
Padé methods choose the update rule to be an approximation to $\sign(x)$ of a given degree that is optimally accurate in the neighborhood of $x=1$.
Instead, we should choose the approximation to $\sign(x)$ that is optimal over an \emph{interval} $[\ell, 1] \subset \R_{\geq 0}$ that contains the singular values.
Moreover, after each step of the algorithm, the range of the singular values changes; therefore, we adapt the update rule at each iteration to match the new interval.
When the range of the singular values is large, this approach ensures that the update rule shrinks it as quickly as possible.
As the algorithm proceeds and the interval shrinks to a small neighborhood of $1$, the update rule approaches that of a Padé method, maintaining the same high order of convergence as it has.

Within the class of odd rational functions whose numerators and denominators have degree $2q+1$ and $2q$, respectively, an explicit formula for this optimal approximation to $\sign(x)$ on any interval $[\ell, 1]$ was found by Zolotarev.
It was shown that these rationals have remarkable convergence properties for any $q$ \citep{nakatsukasa2016computing}.
For $q=1$, this optimal approximation coincides exactly with the dynamically weighted Halley's method (QDWH) referenced above.
For even faster convergence than QDWH, \citep{nakatsukasa2016computing} proposed the Zolo-pd method, which uses $q=17$.
Finally, these methods all admit the same QR-based implementation trick as QDWH.

\paragraph{Adaptive polynomial methods.} In this paper, we adopt the paradigm of Zolo-pd \citep{nakatsukasa2016computing} but with polynomials rather than rationals of degree $(2q+1, 2q)$. This choice avoids the need for QR factorizations, relying solely on GPU-friendly matrix-matrix multiplications in low-precision arithmetic. While this class of methods has not been fully developed in the numerical analysis literature, similar ideas have been rediscovered in different guises. 
In an unpublished manuscript that predates Zolo-pd, \citet{chen2014stable} describe a rescaling strategy for Newton-Schulz.
Though motivated differently, their method is equivalent to ours for degree-3 polynomials (unlike our work, they do not consider general odd degree).
They also observe numerical instability that prevents the method from converging to all the way to machine precision.
Using the insights of \citet{doi:10.1137/110857544}, they propose a simple mitigation for this issue that we adopt in \Cref{sec:finite_prec}. 
Our work gives the approach from \citet{doi:10.1137/110857544} a stronger theoretical foundation that connects to the paradigm of Zolo-pd. Concretely, we prove that choosing an optimal polynomial at each iteration leads to a composed polynomial that is \emph{globally} optimal in the sense of \eqref{eq:matrix_minimax_problem}.

Independently, a group of cryptographers developed a similar method for approximating the scalar function $\sign(x)$ in the context of homomorphic encryption schemes \citep{lee2021minimax}. Their focus is mainly on tuning the analogues in their setting of the polynomial degree and number of iterations, whereas we focus on demonstrating optimality and efficiently constructing the update polynomials for degree $3$ and $5$.
In addition, we consider matrix-valued inputs in low-precision arithmetic---not scalars in exact arithmetic---and we demonstrate our method's effectiveness within the {\muon} algorithm for training deep neural networks.

\paragraph{Application within \muon{}.} The designers of \muon{} realized that, due to the extreme efficiency requirements and lax accuracy requirements of their setting, rational-based methods from the numerical analysis literature are inapplicable.
However, polynomial-based iteration schemes can take full advantage of GPUs because they use only matrix-matrix products in half-precision arithmetic, not inverses or QR decompositions.
The preference for speed over accuracy motivates methods that aim to quickly produce coarse approximations, even at the cost of asymptotic convergence.
Examples include the proposals of Jordan~\citep{jordan2024muon} and You~\citep{cesista2025muonoptcoeffs}, as discussed in \Cref{sec:intro_computing_polar}. Like \citet{chen2014stable}, Jordan found that convergence in the initial phase can be accelerated by choosing update rules that have a large derivative near zero, so as to increase the small singular values as much as possible at each iteration.
You furthermore chose to use different update rules at each iteration, allowing extra flexibility to tune the trade-off between speed and accuracy. 
Both used degree-5 polynomials that were found through gradient descent on heuristic objective functions.
These proposals were previously compared to Newton-Schultz\footnote{\citet{jordan2024muon} actually compares to $2x - \frac32 x^3 + \frac12 x^5$, whereas the true degree-5 Newton-Schulz polynomial is $(15x - 10x^3 + 3x^5)/8$. However, the difference in performance is negligible for the first few iterations.}, but never to \citet{doi:10.1137/110857544}.
We find that our method (which generalizes \citet{doi:10.1137/110857544})  outperforms them all.

Finally, we remark that concurrent work of Grishina, Smirnov, and Rakhuba also proposes an adaptive polynomial method that generalizes \citet{doi:10.1137/110857544} and applies it to accelerating \muon{} \citep{grishina2025acceleratingnewtonschulziterationorthogonalization}. Like \citet{doi:10.1137/110857544}, this work does not establish global optimality of the composed polynomial as we do in \Cref{sec:optimal_polynomials} or address finite precision considerations.

\section{Proof of \texorpdfstring{\Cref{theorem:greedy}}{Theorem}}\label{section:optimalcomp}
The aim of this section is to prove \Cref{theorem:greedy}. We begin with a result that provides a few essential properties for the the polynomial solving \eqref{eq:scalar_minimax_problem} when $T = 1$. This result is known as Chebyshev's theorem \citep{chebyshev1947questions} or the equioscillation theorem \citep[Chapter 10]{trefethen2019approximation}.
\begin{lemma}\label{theorem:approxtheorystuff}
    Let $d = 2q+1$ and $u, \ell > 0$. Consider the problem
    \begin{equation}\label{eq:oneiteration}
        \min\limits_{p \in \mathbb{P}_{d}^{\odd}} \max\limits_{x \in  [\ell,u]} |1 - p(x)|.
    \end{equation}
   There exists a unique polynomial $p^{\star} \in \mathbb{P}_d^{\odd}$ solving \eqref{eq:oneiteration}. Furthermore,  $p^{\star}$ is the unique solution to the above problem if and only if there exist $q+2$ distinct points $\{x_0,\ldots,x_{q+1}\} \subset [\ell,u]$ such that 
        \begin{equation*}
            1 - p^{\star}(x_i) \;=\; \eta(-1)^i \max\limits_{x \in [\ell,u]} |1 - p^{\star}(x)|, \quad \mbox{for} \; i = 0,\ldots,q+1,
        \end{equation*}
        for $\eta = 1$ or $\eta = -1$.
\end{lemma}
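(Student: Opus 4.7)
The plan is to recognize that the result is an instance of the classical Chebyshev equioscillation theorem applied to a specific finite-dimensional Haar (Chebyshev) subspace of $C([\ell,u])$. Namely, the subspace $\mathbb{P}_d^{\odd}$ with $d = 2q+1$ is $(q+1)$-dimensional with basis $\{x, x^3, \ldots, x^{2q+1}\}$, and the classical theorem guarantees existence and uniqueness of the best approximation from a Haar space, together with an alternation characterization involving $\dim+1 = q+2$ equioscillating points. So the only real task is to verify that $\mathbb{P}_d^{\odd}$ is Haar on $[\ell,u]$ and then cite the classical theorem.

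The key step is the Haar condition: any nonzero $p \in \mathbb{P}_{2q+1}^{\odd}$ has at most $q = \dim(\mathbb{P}_{2q+1}^{\odd}) - 1$ zeros in $[\ell,u]$. To establish this I would factor $p(x) = x\, r(x^2)$ where $r$ is an ordinary polynomial of degree at most $q$. Because $\ell > 0$, the factor $x$ does not vanish on $[\ell,u]$, and the map $x \mapsto x^2$ is a bijection from $[\ell,u]$ onto $[\ell^2, u^2]$. Therefore the zeros of $p$ in $[\ell, u]$ are in bijection with the zeros of $r$ in $[\ell^2,u^2]$, and there are at most $\deg(r) \leq q$ of them. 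This is the only place the hypothesis $\ell > 0$ is used, and it is essential: otherwise $\{x, x^3, \ldots, x^{2q+1}\}$ would still be Haar on $(0,u]$ but the proof would need adjustment at the origin.

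Once the Haar condition is in hand, I would invoke the classical Chebyshev/equioscillation theorem for best uniform approximation of a continuous function (here $f \equiv 1$) from a Haar subspace of dimension $q+1$; see for instance \cite[Chapter 10]{trefethen2019approximation}. This theorem immediately yields existence of a minimizer, uniqueness of that minimizer, and the characterization that $p^{\star}$ is optimal if and only if there exist $q+2$ distinct points $x_0 < x_1 < \cdots < x_{q+1}$ in $[\ell,u]$ at which $1 - p^{\star}(x_i) = \eta\,(-1)^i \max_{x \in [\ell,u]}|1-p^{\star}(x)|$ for a common sign $\eta \in \{+1,-1\}$, exactly as stated in the lemma.

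The main obstacle, if any, is purely bookkeeping: one must be careful to phrase the classical theorem for approximation from a Haar subspace (not only for approximation by all polynomials of a given degree), since here we have the odd-polynomial constraint. Verifying Haar-ness replaces the usual appeal to the fundamental theorem of algebra and is the one line of non-trivial content in the argument. After that, the alternation count $q+2$ follows from the general principle ``dimension plus one'' rather than anything specific to our parity restriction.
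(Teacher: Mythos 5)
Your proposal is correct and follows essentially the same route as the paper: both reduce the lemma to the classical Chebyshev equioscillation theorem for Haar subspaces and then verify that $\mathbb{P}_{2q+1}^{\odd}$ satisfies the Haar condition on $[\ell,u]$ with $\ell>0$. Your verification via the substitution $p(x)=x\,r(x^2)$ is a minor (and equally valid) variant of the paper's root-pairing argument.
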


\begin{proof}
    A discussion can be found in \citet{signapprox}. Here we include a formal proof for completeness. 

    By Chebyshev's Theorem \citep{achieser2013theory,chebyshev1947questions,cheney1966introduction} it is sufficient to show that $\mathbb{P}_d^{\odd}$ satisfies the Haar condition: any non-zero $p \in \mathbb{P}_d^{\odd} = \mbox{span}\{x, \ldots, x^3, \ldots, x^{2q+1}\}$ can have at most $q$ roots in $[\ell,u]$. 
    
    Since $\deg(p) = d = 2q+1$ we know that $p$ can have at most $2q+1$ roots in $\mathbb{R}$. However, since $p(0) = 0$ and $p(x) = -p(-x)$ we know that $p$ has one root at zero, and the remaining roots come in symmetric pairs $(x,-x)$. Because of this, $p$ can have at most $q$ roots in the positive orthant, and thus it can have at most $q$ roots in $[\ell,u] \subset (0, \infty)$. Hence, $\mathbb{P}_d^{\odd}$ satisfies the Haar condition, which yields the desired result. 

\end{proof}

The proof of \Cref{theorem:greedy} will be by induction on $T$. We begin by establishing the base case, $T=1$, which is handled by the following result.
\begin{lemma}\label{lemma:computableerror}
    Let $u, \ell > 0$ and define
    \begin{equation*}
        p^{\star} := \argmin\limits_{p \in \mathbb{P}_d^*} \max\limits_{x \in [\ell,u]}|1-p(x)|.
    \end{equation*}
    Then 
    \begin{equation*}
    p^{\star}(\ell) = \min\limits_{x \in [\ell,u]} p^{\star}(x), \quad \max\limits_{x \in [\ell,u]} p^{\star}(x) = 2- p^{\star}(\ell), \text{ and }\max\limits_{x \in [\ell,u]}|1-p^{\star}(x)| = 1-p^{\star}(\ell).
    \end{equation*}
\end{lemma}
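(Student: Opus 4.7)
The plan is to leverage the equioscillation characterization from \Cref{theorem:approxtheorystuff} and analyze the structure of the extrema of $p^{\star}$ using the fact that $p^{\star}$ is an odd polynomial of degree $2q+1$.

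Let $E := \max_{x \in [\ell,u]} |1 - p^{\star}(x)|$ denote the optimal error. First I would apply \Cref{theorem:approxtheorystuff} to obtain $q+2$ distinct equioscillation points $x_0 < x_1 < \cdots < x_{q+1}$ in $[\ell, u]$ at which $1 - p^{\star}$ alternates in sign and attains magnitude $E$. Each such point is a local extremum of $p^{\star}$ on $[\ell,u]$, so it is either an endpoint of $[\ell,u]$ or a root of $(p^{\star})'$ lying in $(\ell,u)$. Since $p^{\star} \in \mathbb{P}_{2q+1}^{\odd}$, its derivative $(p^{\star})'$ is an even polynomial of degree $2q$ and hence has at most $q$ distinct positive real roots. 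Because we need $q+2$ extrema in total and at most $q$ of them can be interior critical points, both endpoints $\ell$ and $u$ must belong to the equioscillation set, and $(p^{\star})'$ must have exactly $q$ distinct roots in $(\ell, u)$, using up its full ``budget'' of positive roots.

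Next I would show that $\ell$ is a local \emph{minimum} of $p^{\star}$ on $[\ell,u]$, i.e., that $p^{\star}(\ell) = 1 - E$. Testing the odd linear polynomial $p(x) = 2x/(\ell+u)$ gives worst-case error $(u-\ell)/(u+\ell) < 1$, so by optimality $E < 1$, and thus both candidate values $p^{\star}(\ell) \in \{1 - E,\, 1 + E\}$ are strictly positive. Since all $q$ positive roots of $(p^{\star})'$ lie in $(\ell,u)$, the derivative has no root on $[0,\ell]$, so $p^{\star}$ is strictly monotonic on $[0,\ell]$. Combined with $p^{\star}(0) = 0$ and $p^{\star}(\ell) > 0$, this forces $p^{\star}$ to be strictly increasing on $[0,\ell]$, and in particular $(p^{\star})'(\ell) \geq 0$. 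If instead we had $p^{\star}(\ell) = 1 + E$, then $\ell$ would be a local maximum of $p^{\star}$ on $[\ell,u]$, requiring $(p^{\star})'(\ell) \leq 0$ and hence $(p^{\star})'(\ell) = 0$. But this would give a $(q+1)$-st positive root of $(p^{\star})'$, contradicting the degree bound. Therefore $p^{\star}(\ell) = 1 - E$.

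Finally, all three claimed identities follow immediately. From the equioscillation structure, $\min_{x \in [\ell,u]} p^{\star}(x) = 1 - E = p^{\star}(\ell)$ and $\max_{x \in [\ell,u]} p^{\star}(x) = 1 + E = 2 - p^{\star}(\ell)$, and $\max_{x \in [\ell,u]}|1 - p^{\star}(x)| = E = 1 - p^{\star}(\ell)$. The main obstacle is the case analysis pinning down $p^{\star}(\ell) = 1 - E$ rather than $1 + E$; it relies crucially on the fact that the interior critical points of $p^{\star}$ exhaust the full budget of positive roots of $(p^{\star})'$, combined with the forced monotonicity of $p^{\star}$ on $[0,\ell]$ coming from oddness.
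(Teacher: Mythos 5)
Your proof is correct and follows essentially the same route as the paper's: both invoke the equioscillation theorem, count the at-most-$q$ positive critical points of the even degree-$2q$ derivative to force both endpoints into the alternation set, and then rule out $p^{\star}(\ell)=1+E$ by exploiting $p^{\star}(0)=0$ and the absence of critical points in $(0,\ell]$ (the paper phrases this last step via the intermediate value theorem rather than via strict monotonicity, but the substance is identical). No gaps worth noting.
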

\begin{proof}
    Throughout the proof we assume $d = 2q+1$. We begin with proving $$p^{\star}(\ell) = \min\limits_{x \in [\ell,u]} p^{\star}(x).$$
    Consider the polynomial $e(x) := 1-p^{\star}(x)$. The proof will contain three steps. We first rule out the trivial case that $p^{\star} \neq 0$, since $p(x) = \frac{2}{\ell + u}x$ would then be a better approximation. Hence, $p^{\star}$ cannot be the zero polynomial. 
    
    \underline{\emph{Step 1: $e(x)$ has exactly $q$ stationary points inside the open interval $(\ell,u)$.}}
    
    Note that $e(x)$ has at most $2q$ stationary points in $\mathbb{R}$, since its derivative $e'(x)$ is a polynomial of degree $2q$.  Furthermore, since $p^{\star}$ is odd, we have that $e'(x) = -p'(x)$ is even of degree $2q$, and thus can have at most $q$  stationary points contained in $(0,+\infty)$. Hence, there can be at \emph{most} $q$ stationary points of $e(x)$ inside the interval $[\ell,u]$.

    By \Cref{theorem:approxtheorystuff} there are $q + 2$ points $x_0,\ldots,x_{q+1} \in [\ell,u]$ where $e(x)$ is maximized or minimized in $[\ell,u]$. These points are either stationary points or they are endpoints of the interval $[\ell,u]$. Let $n_{\text{ext}}$ be the number of stationary points and $n_{\text{stat}}$ be the number of endpoints in the set $\{x_0,\ldots,x_{q+1}\}$. Since a point can be both a stationary point and an endpoint we have $q+2 \leq n_{\text{end}} + n_{\text{stat}}$. However, $n_{\text{end}} \leq 2$ and $n_{\text{stat}} \leq q$, which follows from the previous paragraph where we showed that there are at most $q$ stationary points of $e(x)$ in $[\ell,u]$. So $n_{\text{end}} + n_{\text{stat}} \leq q + 2$, and consequently we must have $n_{\text{end}} = 2$ and $n_{\text{stat}} = q$, as required.

    \underline{\emph{Step 2: $x = \ell$ is a maximum of $e(x)$ on the interval $[\ell,u]$}}
    
    By \Cref{theorem:approxtheorystuff} and  the discussion from Step 1, we know that $|e(x)|$ is maximized at $q+2$ points inside $[\ell,u]$ and $q$ of these points are contained inside the open interval $(\ell,u)$. Hence, $x = \ell$ must either be a maximum or a minimum of $e(x)$. We will show that $x = \ell$ must be a maximum by contradiction. 
    
   Suppose $x = \ell$ was a minimum of $e(x)$ on $[\ell,u]$. First note that $p^{\star}$ is trivially non-negative on $[\ell,u]$, or else $p(x) = 0$ would be a better polynomial. Hence, since $p^{\star}(0) = 0$ we must have ${p^{*}}'(\delta) > 0$ for some $\delta \in [0,\ell]$, or else the zero polynomial $p(x) = 0$ would be a better approximation. Hence, for some $\delta \in [0,\ell]$ we have $e'(\delta)< 0$.

   We must also have $e'(\ell) \geq 0$ or else $x=\ell$ is not a minimum of $e(x)$. Since $e'(\delta) < 0$ for some $\delta \in [0,\ell]$ and $e'(\ell) \geq 0$, by the intermediate value theorem there exists a point $x^* \in [0,\ell]$ such that $e'(x^*) = 0$. However, by the discussion above we know that all stationary points of $e$ are contained inside the open interval $(\ell,u)$. Hence, $x = \ell$ cannot be a minimum of $e(x)$ on $[\ell,u]$. However, by Step 1 we know that the endpoints of $[\ell,u]$ must be either minima or maxima of $e(x)$. Hence, $x=\ell$ is a maximum  of $e(x)$ on $[\ell,u]$.
   
   \underline{\emph{Step 3: Obtaining the desired equalities}}
   
   Since $e(x)$ has a maximum in $[\ell,u]$ at $x = \ell$, we have $p^{\star}(\ell) = \min\limits_{x \in [\ell,u]} p^{\star}(x)$. The other two equalities are immediate consequences of the equioscillation property of $p^{\star}$ \Cref{theorem:approxtheorystuff} and that $x = \ell$ is a minimum of $p^{\star}$ over the set $[\ell,u]$. 
\end{proof}

With the above-mentioned result in hand, we are ready to prove \Cref{theorem:greedy}.

\greedy*
\begin{proof}
    The proof of \eqref{eq:newbounds} is an immediate consequence of \Cref{lemma:computableerror}, since for each $t = 1,\ldots,T$, $p_t$ is the optimal approximation in $\mathbb{P}_d^{\odd}$ to $x \mapsto 1$. 
    
    We now proceed with the proof of \eqref{eq:optimal}, which will be by induction. The proof for $T = 1$ is an immediate consequence of \Cref{lemma:computableerror} and we also have $p^{\star}(\ell) = \ell_2$ by \eqref{eq:newbounds}. Now suppose the result is true for all $t \leq T-1$. Thus 
    $$ g(x) := p_{T-1} \circ \cdots \circ p_1(x) $$
    is the optimal solution of~\eqref{eq:optimal} for $T-1.$
    For $t = 1,\ldots,T-1$, note that the image of $p_t$ on $[\ell_t,u_t]$ is exactly $[\ell_{t+1},u_{t+1}]$ by \Cref{lemma:computableerror}. Hence,  the image of $g$ on $[\ell,u]$ is $[\ell_{T},u_{T}]$. Furthermore, by~\Cref{lemma:computableerror} we also have $g(\ell) = \ell_T$. Pick any $f$ such that $f \neq g$ and
    \begin{equation*}
        f = \widetilde{p}_{T-1} \circ \cdots \circ \widetilde{p}_1, 
    \end{equation*}
    for some $\widetilde{p}_1,\ldots,\widetilde{p}_{T-1} \in \mathbb{P}_{d}^{\odd}$. Let the image of $f$ on $[\ell,u]$ be $[a,b]$. We will prove that $\frac{a}{b} \leq \frac{\ell_T}{u_T}$  by contradiction. 
    
    Suppose $\frac{a}{b} > \frac{\ell_T}{u_T}$. 
    Define $c = \frac{2}{a+b}$. Then, the image of the scaled function $c f$ on $[\ell,u]$ is $[ca,cb]$ and $cf$ satisfies
    \begin{equation*}
        \max\limits_{x \in [\ell,u]}|1-cf(x)| = \max\left\{1-ca,cb-1\right\} = \frac{b-a}{a+b}.
    \end{equation*}
    Recall by our inductive hypothesis, we have $\max\limits_{x \in [\ell,u]}|1-g(x)| = 1-\ell_{T} = u_T-1$ where the second equality holds by \eqref{eq:newbounds}. It follows that
    \begin{align*}
        \frac{a}{b} &> \frac{\ell_T}{u_T} \\
        \Leftrightarrow \frac{a}{b} &> \frac{\ell_T}{2-\ell_T}\\
        \Leftrightarrow \ell_T &< \frac{2a}{a+b}\\
        \Leftrightarrow 1-\ell_T &> \frac{b-a}{a+b}\\
        \Leftrightarrow \max\limits_{x \in [\ell,u]}|1-g(x)|&> \max\limits_{x \in [\ell,u]}|1-cf(x)|,
    \end{align*}
    which leads to a contradiction to our inductive hypothesis that $g$ is optimal. Hence, we must have $\frac{a}{b} \leq \frac{\ell_T}{u_T}$. 
    
    Consequently, using that $\frac{a}{b} \leq \frac{\ell_T}{u_T}$, we will show  for any $\widetilde{p}_T \in \mathbb{P}_d^{\odd}$ and for any $f = \widetilde{p}_{T-1} \circ \cdots \circ \widetilde{p}_1$\rev{, that }  $\widetilde{p}_T \circ f$ cannot be a better approximation than $p_T \circ g$. In particular, we have
    
    
    

    \begin{align*}
        \max\limits_{x \in [\ell,u]} |1-\widetilde{p}_{T}(f(x))|&\geq\min\limits_{p \in \mathbb{P}_{d}^*}\max\limits_{x \in [\ell,u]}|1-p(f(x))| \\
        &= \min\limits_{p \in \mathbb{P}_{d}^*}\max\limits_{x \in [a,b]}|1-p(x)| \\
        &= \min\limits_{p \in \mathbb{P}_{d}^*}\max\limits_{x \in [a/b,1]}|1-p(x)|\\
        &\geq \min\limits_{p \in \mathbb{P}_{d}^*} \max\limits_{x \in [\ell_T/u_T,1]}|1-p(x)| \\
        &= \min\limits_{p \in \mathbb{P}_{d}^*} \max\limits_{x \in [\ell_T,u_T]}|1-p(x)| \\
        & = \min\limits_{p \in \mathbb{P}_{d}^*} \max\limits_{x \in [\ell,u]}|1-p(g(x))|\\
        & =  \max\limits_{x \in [\ell_T,u_T]}|1-p_{T}(g(x))| = 1-p_T(\ell_{T}) = 1-\ell_{T+1},
    \end{align*}
    where the second and third equality follow by changing variables $y = x/b$ so that 
    $$ \min\limits_{p \in \mathbb{P}_{d}^*}\max\limits_{x \in [a,b]}|1-p(x)| = \min\limits_{p \in \mathbb{P}_{d}^*}\max\limits_{y \in [a/b,1]}|1-p(by)| =\min\limits_{p \in \mathbb{P}_{d}^*}\max\limits_{y \in [a/b,1]}|1-p(y)| $$
    and this last equality follows 
  because the space $\mathbb{P}_{d}^*$ is invariant under input rescaling; that is, for any \( b \neq 0 \), the map \( x \mapsto  b x \) preserves the space \( \mathrm{span}\{x, x^3, \dots, x^d\} \). This concludes the proof.
\end{proof}

\section{Proof of \texorpdfstring{\Cref{theorem:convergence}}{Theorem}}
\label{sec:proof_convergence}
In this section we provide the proof of the convergence guarantee stated in \Cref{theorem:convergence}.

\convergence*
\begin{proof}
    Define
    \begin{equation*}
        p^{\star} = \argmin_{\substack{p = p_T \circ p_{T-1} \circ \cdots \circ p_1 \\ p_t \in \mathbb{P}_d^*}} \,
\max_{x \in [\ell, u]}
\left|1-p(x)\right|.
    \end{equation*}
    Then \Cref{alg:polarexpress-gen} returns $\bm{X}_T = p^{\star}(\mM)$.  Let $h \in \mathbb{P}_q$ be the $[q/0]$ Padé-approximant to $(1-x)^{-1/2}$ \citep[Section 3]{kenney1991rational} and define $p(x) = xh(1-x^2) \in \mathbb{P}_d^{\odd}$. Define $f = p \circ \cdots \circ p$ as the composition of $p$ with itself $T$ times. Then, by \Cref{theorem:greedy}, \citep[Theorem 3.1]{kenney1991rational}, and $f(x) \geq 0$ for $x \geq 0$ we have
    \begin{align*}
        \|\sign(\mM) - \bm{X}_T\|_2 &\leq \max\limits_{x \in [\ell,1]}|1-p^{\star}(x)| \\
        &\leq \max\limits_{x \in [\ell,1]}|1-f(x)|\\
        & \leq \max\limits_{x \in [\ell,1]} \left[\frac{|1-x^2|^{(d+1)^T}}{1+f(x)}\right]\\
        & \leq |1-\ell^2|^{(d+1)^T},
    \end{align*}
    as required.
\end{proof}

\section{Proof of equivalence between \texorpdfstring{\eqref{eq:matrix_minimax_problem} and \eqref{eq:scalar_minimax_problem}}{matrix and scalar optimization problems}}\label{section:verification}
In this section we provide a proof for the equivalence between \eqref{eq:matrix_minimax_problem} and \eqref{eq:scalar_minimax_problem}. It is sufficient to show that for any fixed polynomial $p$ we have
\begin{equation*}
    \varepsilon_1 := \max_{\substack{\mM \in \R^{m \times n} \\ \sigma(\mM) \subset [\ell, u]}}
\left\|\polar(\mM)-p(\mM)\right\|_2 =
\max_{x \in [\ell, u]}
\left|1-p(x)\right| := \varepsilon_2.
\end{equation*}
For any fixed $\bm{M}$, by the unitary invariance of the spectral norm we immediately have $$\left\|\polar(\mM)-p(\mM)\right\|_2 = \max\limits_{\sigma_i \in \sigma(\bm{M})}|1-p(\sigma_i)|
\leq\max\limits_{x \in [\ell, u]}
\left|1-p(x)\right|.$$
Consequently, $\varepsilon_1 \leq \varepsilon_2$. 

Suppose that $x^* \in [\ell,u]$ is chosen so that $|1-p(x^*)| = \max_{x \in [\ell, u]}
\left|1-p(x)\right|.$ Without loss of generality, assume $m \geq n$. Letting $\bm{M} = x^* \bm{U}\bm{V}^\T$, for any matrix $\bm{U} \in \mathbb{R}^{m \times n}$ and $\bm{V} \in \mathbb{R}^{n \times n}$ with orthonormal columns, and noting $\polar(\bm{M}) = \bm{U}\bm{V}^\T$ yields
\begin{align*}
  \varepsilon_1 &\geq  \|\polar(\mM)-p(\mM)\|_2  \\ & = \|\bm{I}_n - p(x^*) \bm{I}_n\|_2\\
    &= |1-p(x^*)|\\
    &= \max_{x \in [\ell, u]} 
\left|1-p(x)\right| \; = \varepsilon_2
\end{align*}
Consequently, $\varepsilon_1 \geq \varepsilon_2$. Hence, $\varepsilon_1 = \varepsilon_2$, as desired. 

\section{Remez algorithm}\label{section:remez}
In this section, we show in detail how to solve \eqref{eq:remez_goal}. By \Cref{theorem:greedy}, these solutions give the update rule for a single step of \texttt{Polar Express}.
We give a closed form solution for $d=3$. We then describe how the Remez algorithm \citep{pachon2009barycentric,parks1972chebyshev} can be used to approximate $p_t$ for arbitrary $d$. We then present \Cref{alg:remezdeg5}, a simplified version of Remez for solving \eqref{eq:remez_goal} with $d=5$.
Recall \eqref{eq:remez_goal}:
\begin{equation*}
\argmin_{\substack{p \in \mathbb{P}_d^{\odd}}} \, \max_{x \in [\ell, u]} |1-p(x)|
\end{equation*}

We begin with the case when $d = 3$. We seek a polynomial of the form $p(x) = ax + bx^3$. The Equioscillation Theorem (\Cref{theorem:approxtheorystuff}) stipulates that $p$ must have an equioscillating set of size 3. For $p$ to achieve its maximum error at a point $x$, $x$ must be a local extremum of $p(x) - 1$ on the interval $[\ell, u]$. Thus, for $x$ to be eligible for membership in the equioscillating set, it must either be a true local extremum of $p(x) - 1$ that happens to lie in $[\ell, u]$, or else one of the endpoints $\ell, u$.
However, because $p$ is an odd cubic, it has at most one true local extremum on $\R_{\geq 0}$. Thus, to build an equioscillating set of three points, we must include $p$'s unique positive local extremum \emph{and} both endpoints. This local extremum of $p$ occurs at $\sqrt{\frac{-a}{3b}}$.
Therefore, we seek $a, b$ such that
\begin{equation}\label{eq:equioscillation_deg_3}
p(\ell) = 1-E, \qquad \qquad p\left(\sqrt{\frac{-a}{3b}}\right) = 1 + E, \qquad \qquad p(u) = 1-E
\end{equation}
for some $E$.
This is a system of three equations in three variables.
The solution $p(x) = ax + bx^3$ is most easily expressed as follows.
Let $p_{\NS}(x) = \frac{3}{2} x - \frac{1}{2} x^3$. Then
\begin{equation}\label{eq:deg3_solution}
    p(x) = \beta p_{\NS}(\alpha x), \quad \text{ where } \alpha = \sqrt{\frac{3}{u^2 + \ell u + \ell^2}} \quad \text{ and } \quad \beta = \frac{4}{2 + \ell u(\ell + u) \alpha^3}.
\end{equation}

One can verify that this polynomial satisfies the equioscillation condition of \eqref{eq:equioscillation_deg_3}, with $\sqrt{\frac{-a}{3b}} = \frac1{\alpha}$ and $E = \beta - 1$. Therefore, it must necessarily be the optimal approximation from $\mathbb{P}_3^{\odd}$. 
Note that for $u=1$, $x \mapsto p_{\NS}(\alpha x)$ is the same polynomial derived in \citet{chen2014stable}.

Unfortunately, for larger $d$, finding closed form expressions for optimal approximations from $\mathbb{P}_d^{\odd}$ becomes challenging, and we know of no closed form solution. 
However, we can approximate the optimal polynomial using the Remez algorithm.
Let $d = 2q+1$.
Again recalling \Cref{theorem:approxtheorystuff}, the optimal polynomial must satisfy the equioscillation property at a set of $q+2$ points, as in \eqref{eq:equioscillation_deg_3}.
The Remez algorithm finds the equioscillation points $A = \{x_0,\ldots,x_{q+1}\}$ from \Cref{theorem:approxtheorystuff} by iteratively refining a sequence of trial points $A^{(k)} = \{x_0^{(k)}, \ldots, x_{q+1}^{(k)}\}$ so that $A^{(k)}$ converges to $A$. 
From the sequence of trial points $A^{(k)}$ the algorithm also finds a sequence of polynomials $p^{(k)}$ so that $p^{(k)}$ converges to the optimal polynomial. The convergence is very fast, and usually 10 iterations is sufficient to converge to the optimal polynomial up to double precision machine epsilon \citep{pachon2009barycentric}. More commonly, the Remez algorithm is used to find optimal polynomial approximations to general continuous functions where $d \approx 100$ or even $d\approx 1000$. However, because the polynomial we build to approximate $\sign(x)$ is a composition of polynomials, each of which has a low degree, in our setting the degree $d$ is small, usually $d = 5$. 
For $d=5$ the  Remez algorithm simplifies significantly. We now describe this simplified algorithm. 

We first choose an initial set of trial points $A^{(1)}$, which ideally should come close to satisfying the equioscillation property.
From \Cref{theorem:approxtheorystuff}, the unique optimal approximation $p^{\star} \in \mathbb{P}_{5}^{\odd}$ satisfies the equioscillation property at four points in $[\ell, u]$.
Since the function we wish to approximate is constant, the equioscillation points must be extrema of $p^{\star}$ on $[\ell, u]$.
Because $p^{\star}$ is a odd quintic, it can have at most two local extrema on the positive real line, and thus at most two local extrema on $[\ell, u]$.
The other two equioscillation points must therefore be the endpoints $\ell$ and $u$.
Since we know that $\ell$ and $u$ must be equioscillation points we always set $x_0^{(k)} = \ell$ and $x_3^{(k)} = u$ for all $k$. We initialize $x_1^{(1)}$ and $x_2^{(1)}$ to $\frac34 \ell + \frac14 u$ and $\frac14 \ell + \frac34 u$, since we observe that as $\ell \to u$ these are approximately the other two equioscillation points.

We now show how to refine a candidate set of trial points $A^{(k)}$ to produce $A^{(k+1)}$ as well as an approximately equioscillating polynomial $p_k$.
For any fixed set of trial points $\{\ell, x_1^{(k)}, x_2^{(k)}, u\}$, we can find a degree-5 odd polynomial $p_k(x) = a_k x + b_k x^3 + c_k x^5$ that satisfies
\begin{equation}\label{eq:remez_step_one}
p_k(\ell) = 1-E_k, \quad p_k(x_1^{(k)}) = 1+E_k, \quad p_k(x_2^{(k)}) = 1-E_k, \quad p_k(u) = 1+E_k
\end{equation}
for some $E_k$ by solving a linear system in $a_k, b_k, c_k$ and $E_k$. This can be rewritten as follows:
\begin{equation}\label{eq:linearsystem}
     \begin{bmatrix} 
    \ell & \ell^3 & \ell^5 & 1 \\
    x_1^{(k)} & (x_1^{(k)})^3 & (x_1^{(k)})^5 & -1 \\
    x_2^{(k)} & (x_2^{(k)})^3 & (x_2^{(k)})^5 & 1\\
    u & u^3 & u^5 & -1 
    \end{bmatrix} \begin{bmatrix} a_k \\ b_k \\ c_k \\ E_k \end{bmatrix} = \begin{bmatrix} 1 \\ 1 \\ 1 \\ 1\end{bmatrix}.
\end{equation}
If $A^{(k)}$ were the extrema of the error function $e_k(x) = 1 - p_k(x)$ on $[\ell, u]$, then they would be an equioscillating set for $p_k$, and $p_k$ would be the solution.
Therefore, to refine $A^{(k)}$, we find the extrema of $e_k(x) = 1 - p_k(x)$.
These can occur at $\ell, u$ and the roots of $e_k'(x)$.
Setting $e_k'(x) = 0$ yields the quartic equation $5c_kx^4 + 3b_kx^2 + a_k = 0$, whose two solutions are given explicitly by the \emph{quadratic} formula after the substitution $y = x^2$. We set $x_1^{(k+1)}$ and $x_2^{(k+1)}$ to be the solutions to this equation and let $A^{(k+1)} = \{\ell,x_1^{(k+1)},x_2^{(k+1)},u\}$. We repeat the procedure until $|E_k| := \max\limits_{x \in [\ell,u]} |1-p_k(x)| \approx \max\limits_{x \in [\ell,u]} |1-p_{k+1}(x)|=:|E_{k+1}|$.

We note that the matrix appearing in \eqref{eq:linearsystem} is a Vandermonde matrix. Vandermonde matrices become notoriously ill-conditioned as the degree grows large \citep[Section 4.6]{matrixcomputations}. However, since in our setting we choose $d$ to be small, there is no ill-conditioning due to large degrees. Instead, we observe ill-conditioning when $\ell \approx u$. However, as $\ell/u \to 1$ the optimal polynomial will converge to the polynomial $\frac{x/u}{8}\left(15-10(x/u)^2 + 3(x/u)^4\right)$, which can be verified by noting that as $\ell/u \to 1$ all equioscillation points  $x_0,x_1,x_2,x_3$ must converge to $u$. For general $d = 2q+1$, the polynomial will converge to $(x/\ell)h(1-(x/\ell)^2)$ where $h \in \mathbb{P}_q$ is the $[q/0]$ Padé approximant to $(1-x)^{1/2}$ \citep{kenney1991rational}. In fact, this polynomial is extremely close to the optimal polynomial for sufficiently large $\ell$. To see this, let $p^{\star}$ be the optimal approximation from $\mathbb{P}_{5}^{\odd}$ and let $p(x) = \frac{x/u}{8}\left(15-10(x/u)^2 + 3(x/u)^4\right)$. Then,
\begin{align*}
    \max\limits_{x \in [\ell,u]}|p^{\star}(x) - p(x)| &\leq \max\limits_{x \in [\ell,u]}|1 - p(x)| + \max\limits_{x \in [\ell,u]}|1-p^{\star}(x)| \\
    &\leq2\max\limits_{x \in [\ell,u]}|1 - p(x)| \\
    &\leq 2 \left(1-\ell/u\right)^3.
\end{align*}
where we invoked \citep[Theorem 3.1]{kenney1991rational} and the fact that $p^{\star}$ is the optimal approximation to $x\mapsto 1$ from $\mathbb{P}_{5}^{\odd}$.  Hence, when $\ell/u \geq 1-\epsilon_d^{1/3}$, where $\epsilon_{\text{double}} \approx 1.1 \times 10^{-16}$ is the double precision machine epsilon, then $|p^{\star}(x) - p(x)| \leq 2 \epsilon_{\text{double}}$. In other words, up to double precision machine epsilon, $p^{\star}$ is equal to $p$. Therefore, whenever $\ell/u \geq 1-\epsilon_{\text{double}}^{1/3}$ the algorithm simply returns the Padé approximant (that is, the scaled Newton-Schulz polynomial). 

The full algorithm is given in \Cref{alg:remezdeg5}.
In our experiments, we never observed \Cref{alg:remezdeg5} taking more than five iterations to converge. This algorithm is implemented in full in \Cref{app:code}.

\begin{algorithm}
\caption{Remez algorithm (degree 5 approximation for $\sign(x)$)}\label{alg:remezdeg5}
\textbf{input:} interval $[\ell,u]$ for $u > \ell > 0$.\\
\textbf{output:} Approximation $p \in \mathbb{P}_5^{\odd}$ to $p^{\star} = \argmin\limits_{p \in \mathbb{P}_5^{\odd}}\max\limits_{x\in[\ell,u]}|1-p(x)|$.
\begin{algorithmic}
\State \mbox{\bf define } $\epsilon_{\text{double}}= 1.11 \times 10^{-16}$
\If{$\ell / u \geq 1 - \epsilon_{\text{double}}^{1/3}$ }
\State Return $p(x) = \frac{x/u}{8}\left(15-10(x/u)^2 + 3(x/u)^4\right)$
\EndIf
\State $x_1^{(1)} = \frac34 \ell + \frac14 u, \quad x_2^{(1)} = \frac14 \ell + \frac34 u$.
\State $E_0 = \infty, \quad E_{-1} = -\infty$
\State $k \gets 0$
\While{$||E_k| - |E_{k-1}|| > \epsilon_{\text{double}}$ }
\State $k \gets k+1$
\State $\begin{bmatrix}a_k \\ b_k \\ c_k \\ E_k\end{bmatrix} = \begin{bmatrix} 
    \ell & \ell^3 & \ell^5 & 1 \\
    x_1^{(k)} & (x_1^{(k)})^3 & (x_1^{(k)})^5 & -1 \\
    x_2^{(k)} & (x_2^{(k)})^3 & (x_2^{(1)})^5 & 1\\
    u & u^3 & u^5 & -1 
    \end{bmatrix}^{-1}\begin{bmatrix}1 \\ 1 \\ 1 \\ 1\end{bmatrix}$
\State $x_1^{(k+1)} = \sqrt{\frac{-3b_k - \sqrt{9b_k^2 - 20a_kc_k}}{10c_k}}, \quad x_2^{(k+1)} = \sqrt{\frac{-3b_k + \sqrt{9b_k^2 - 20a_kc_k}}{10c_k}}$
\EndWhile
\State Return $p(x) = a_kx + b_kx^3 + c_kx^5$
\end{algorithmic}
\end{algorithm}

\section{Finite precision considerations}\label{sec:finite_prec_app}
As highlighted in \Cref{sec:finite_prec}, one must take care to implement \texttt{Polar Express} in finite precision. In this section we outline modifications to our method to ensure stability in finite precision arithmetic. 

The first issue arises when numerical round-off creates singular values that are slightly larger than our current upper bound $u_t$.
Our optimal polynomials converge only when the singular values of $\mX_t$ are less than $u_t$.
In some cases we have 
$$p_t(u_t + \epsilon) > u_{t+1} + \epsilon,$$
so over many iterations, a singular value that is slightly larger than $u_t$ large could grow to $\infty$ instead of converging to $1$.

To fix this issue, we simply replace each polynomial $x \mapsto p_t(x)$ by $x \mapsto p_t(x / 1.01)$.
This safety factor corrects for round-off errors in previous iterations while only slightly changing the behavior of the polynomial on the interval $[\ell_t, u_t]$, though it does cause the singular values to converge to $0.999998$ instead of to $1$. To correct for this, the safety factor can be omitted in the final iteration. This fix is reflected in line~\ref{line:numerical_1} of \Cref{alg:polarexpress-gen}.

The second issue was identified in \citet{doi:10.1137/110857544} and addressed in the context of polynomial iterations by \citet{chen2014stable}.
In general, iterative methods for $\polar(\mM)$ aim to increase each singular value relative to the largest singular value; while $\sigma_{\min}(\mX_0) \ll \sigma_{\max}(\mX_0)$, after enough iterations, $\sigma_{\min}(\mX_t) \approx \sigma_{\max}(\mX_t) \approx 1$.
However, the convergence of each singular value to $\sigma_{\max}$ may not be monotonic.
Over the domain $[\ell_t, u_t]$, our optimal polynomial $p_t$ oscillates repeatedly between $\ell_{t+1}$ and $u_{t+1}$, so some singular values that are near $u_t$ may get mapped down to $\ell_{t+1}$.
It so happens that this non-monotonicity---even at a single iteration---can cause loss of precision.
That is, problems occur if $$\frac{p_t(\sigma_i)}{\sigma_i} \ll \frac{\max\limits_{x \in [\sigma_{\min},\sigma_{\max}]}p_t(x)}{\sigma_{\max}},$$ where $0 \leq \sigma_{\min} \leq \sigma_i \leq \sigma_{\max}$ are singular values of $\mX_t$ \citep{doi:10.1137/110857544}. In the extreme case $p_t(\sigma_i) < 0$, the $i$th singular vector will change sign, causing the method to converge to the polar factor of the wrong matrix.
Unlike Newton-Schulz, unscaled Newton, or QDWH, our method is affected by this loss of precision.

To mitigate this issue, \citet{chen2014stable} propose modifying their update polynomials to enforce a lower bound on the ratio $\frac{p_t(\sigma_i)}{\sigma_i}$.
This issue only occurs when $\ell_t \ll u_t$; as $\ell_t \to u_t$, our optimal polynomial approaches the Padé approximant and so $\frac{p_t(x)}{x} \geq 1$ for all $x \in [0, u_t]$.
We could fully solve the problem by using the Padé approximant instead of our optimal polynomial, but this would significantly slow down convergence.
Instead we compromise.
When $\ell_t \geq u_t / 10$, we find that $\frac{p_t(x)}{x} \geq 0.236$.
Therefore, whenever $\ell_t < u_t / 10$ we select the update rule as though $\ell_t = u_t / 10$.
This change slows convergence, but only very slightly.
(The choice of 10 is somewhat arbitrary. In \Cref{app:code}, we use a different factor.) This fix is reflected in line~\ref{ln:offline-polynomial-comp} of \Cref{alg:polarexpress-gen}.

The third change is copied from the original \muon implementation: normalize $\mM$ by $\|\mM\|_\F + 10^{-2}$ instead of by $\|\mM\|_\F$. As before, we set $u_1 = 1$. This fix is reflected in line~\ref{line:numerical_2} of \Cref{alg:polarexpress-gen}.

\begin{figure}
\centering
\vspace{-0.7cm}
\includegraphics[width=0.7\textwidth]{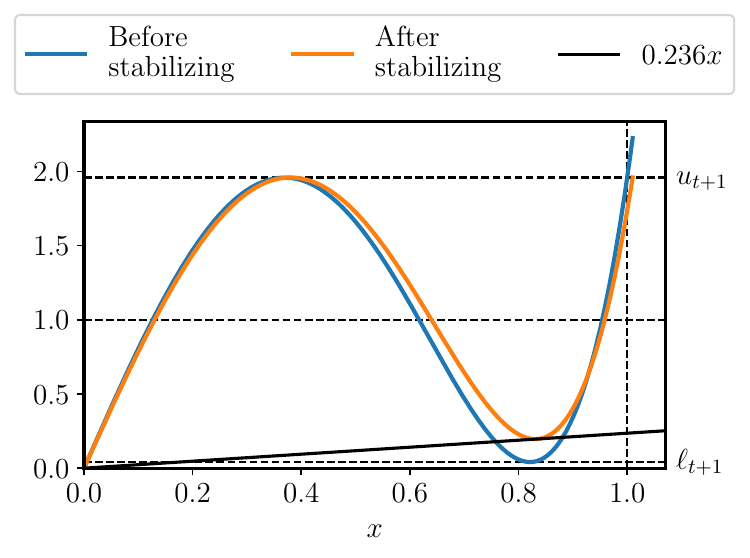}
\caption{Effects of stabilizing the update rules with a safety factor and cushioning, as described in \Cref{sec:finite_prec_app}.
The blue curve is the optimal degree-5 polynomial for the interval $[0.005, 1]$.
It is has numerical issues because it maps singular values near $0.8$ down to almost zero and maps $1+\epsilon$ to $\approx u_{t+1}+25\epsilon$.
The stabilized version is better because it ensures $\frac{p_t(x)}{x} \geq 0.236$ and maps all $x \leq 1.01$ to at most $u_{t+1}$.}
\label{fig:numerical}
\end{figure}

\section{Additional Experimental Results}\label{app:numexp}
In this section, we present additional experimental results.

\subsection{Convergence of \texttt{Polar Express} and Its Impact on Muon}
\label{app:further convergence experiments}

\paragraph{Convergence in Frobenius Norm} In \Cref{fig:convergence_frob}, we plot the convergence of \texttt{Polar Express} and three baselines as measured in the Frobenius norm. We also plot convergence in cosine similarity, which is defined with respect to the Frobenius inner product $\langle \bm{A}, \bm{B }\rangle = \mathrm{Tr}(\bm{A}^\top \bm{B})$.
Formally, the cosine similarity between $\bm{A}$ and $\bm{B}$ is defined as $\frac{\langle \bm{A}, \bm{B }\rangle_\F}{\|\bm{A}\|_\F\|\bm{B}\|_\F}$. We use gradients of GPT-2 layers as test matrices. While \texttt{Polar Express} is designed to minimize the spectral norm error, convergence in the Frobenius norm is similar (compare with \Cref{fig:convergence}).

\begin{figure}
\centering
\includegraphics[width=\textwidth]{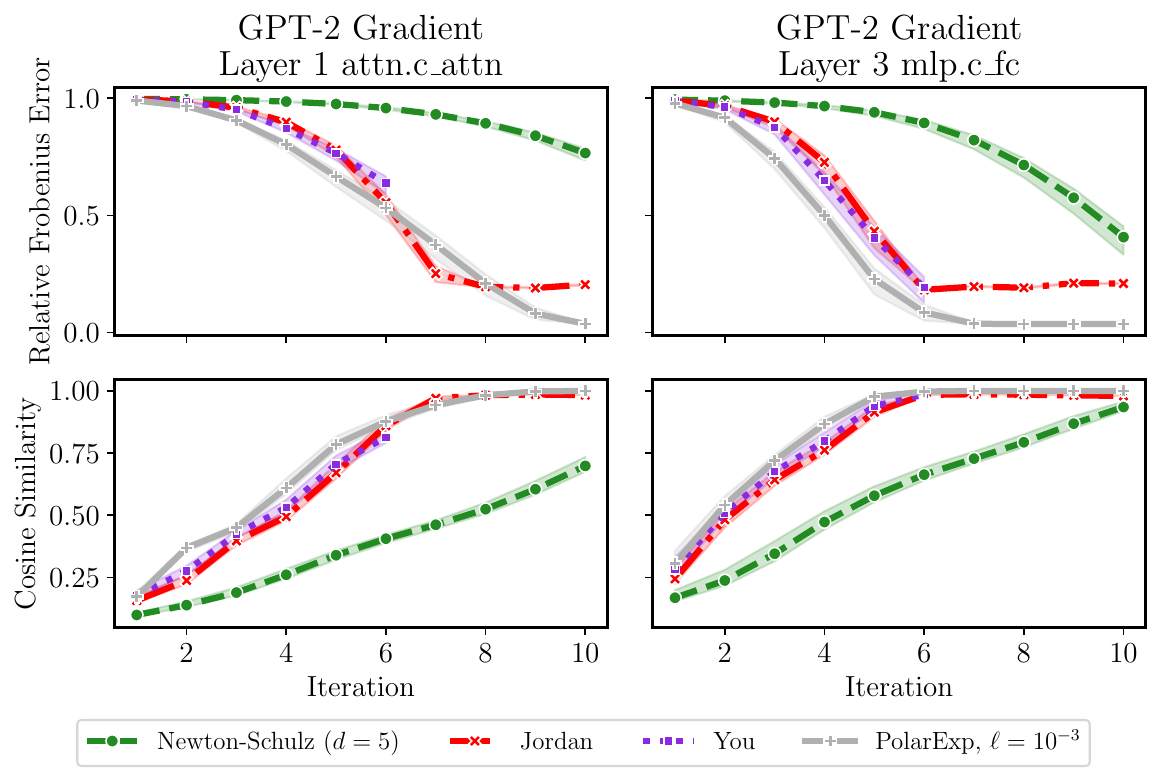}
\caption{\rev{Convergence of degree-5 polynomial methods measured in Frobenius norm and cosine similarity. Test matrices are gradients of two layers of a randomly-initialized GPT-2 model on a batch of language modeling data. Polar Express outperforms other methods.}
}
\label{fig:convergence_frob}
\end{figure}

\rev{
\paragraph{(In)sensitivity of \muon{} to Small Singular Values}
\Cref{fig:num_iters} shows that using more than five or six iterations of \texttt{Polar Express} does not improve the performance of \muon{}.
However, \Cref{fig:convergence,fig:convergence_frob} show that five iterations is not enough for \texttt{Polar Express} or any other method to converge.
In practice, \texttt{Polar Express} is taking steps in directions that are meaningfully different from the exact $\polar(\mM)$ (as computed by an SVD), but still converging equally fast.
One possible explanation for this observation is that \muon{} may not be sensitive to the convergence of small singular values of $\mM$.
Intuitively, the singular vectors associated with these small singular values correspond to directions which have little effect on the output of the neural network; they may signify little more than noise in the stochastic gradients.
}

\rev{
We now conduct an experiment to test this hypothesis.
We compare three ways that a \muon{}-like optimizer could handle the small singular values.
Assume $\mM$ has full rank, and partition the singular value decomposition of $\mM$ into two parts
\begin{equation} \mM = \mU \mSigma \mV^\top = \begin{bmatrix}\mU_1  & \mU_2\end{bmatrix} \begin{bmatrix}\mSigma_1 & \\ & \mSigma_2\end{bmatrix} \begin{bmatrix}\mV_1 & \mV_2\end{bmatrix}^\top = \mU_1 \mSigma_1 \mV_1^\top + \mU_2 \mSigma_2 \mV_2^\top \end{equation}
where $\mSigma_1$ contains the singular values larger than some threshold $\gamma \sigma_{\max}$ and $\mSigma_2$ contains those smaller than $\gamma \sigma_{\max}$, where $\sigma_{\max}$ is the largest singular value of $\mM$. Recall that
\begin{equation}\label{eq:muon rule}\polar(\mM) := \mU \mV^\top = \mU_1 \mV_1^\top + \mU_2 \mV_2^\top\end{equation}
is obtained by mapping each singular value of $\mM$ to $1$.
We define the truncated polar factor by mapping the larger singular values to $1$ and the smaller singular values to $0$:
\begin{equation}\label{eq:truncated muon rule}
\mathrm{polar}_{\gamma}(\mM) := \mU_1 \mV_1^\top.
\end{equation}
A third possibility is to map the small singular values to $-1$:
\begin{equation}\label{eq:reverse muon rule}
\mU \mV^\top = \mU_1 \mV_1^\top - \mU_2 \mV_2^\top
\end{equation}
Note that $-\mU_2 \mV_2^\top$ is in the \emph{opposite} direction as the \muon{} update.
If the small singular values carry meaningful information about the loss landscape, then we expect this partly ``uphill'' step to hurt performance.
Comparing the three update rules in \Cref{eq:muon rule,eq:truncated muon rule,eq:reverse muon rule} can tell us how small singular values affect \muon{}.
}

\rev{
We train GPT-2 Small using each of these three update rules with learning rate $0.05$ and weight decay $0.1$.
We sweep three different options for the cutoff $\gamma$ that defines the `small'' singular values: $10^{-4}$, $10^{-3}$, and $10^{-2}$.
The results are plotted in \Cref{fig:supression}.
They show that the treatment of singular values smaller than $10^{-4} \sigma_{\max}$ does not matter at all for the performance of \muon{}, and those smaller than $10^{-3}\sigma_{\max}$ have a very minor effect.
Notably, even \emph{reversing} the direction of the \muon{} step in the bottom singular subspace barely worsens performance, showing that the gradient information in this subspace not very informative.
The bottom panel of \Cref{fig:supression} shows how five iterations of \texttt{Polar Express} (with $\ell=10^{-3}$) affect small singular values.
Singular values greater than $10^{-3}$ are all mapped close to 1, while those smaller than $10^{-4}$ are all mapped close to 0.
Thus, while \texttt{Polar Express} does not fully converge after five iterations, it does converge in the ways that matter for \muon{}.
}

\begin{figure}
\centering
\includegraphics[width=0.75\textwidth]{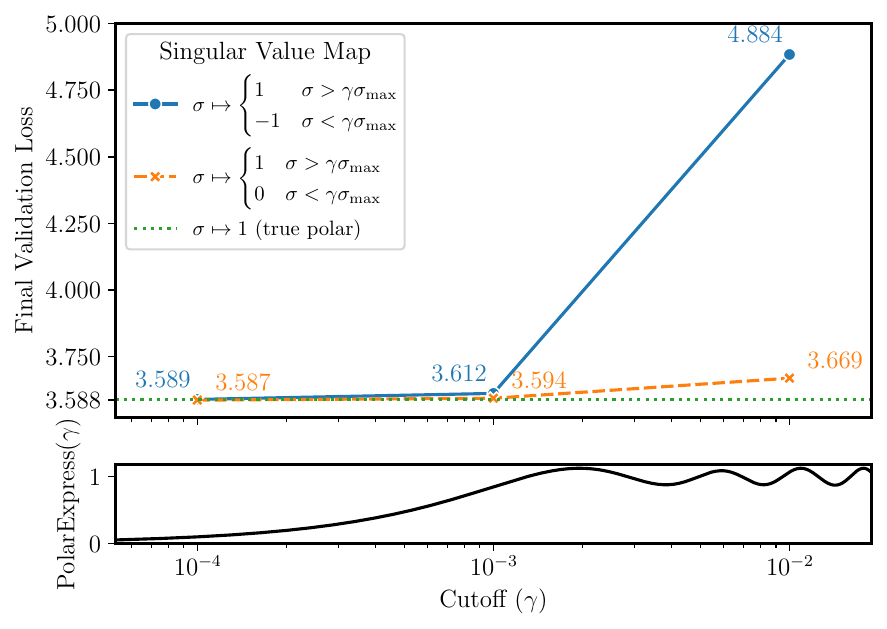}
\caption{\rev{Impact of small singular directions of momentum matrix on optimization quality. We compare three variations of the \muon{} update rule. Exact \muon{} (green) processes the momentum $\mM = \mU \mSigma \mV^\top$ by mapping each singular value to $1$: $\polar(\mM) = \mU \mV^\top$. Truncated \muon{} (orange) maps the larger singular values to 1 and the smaller singular values to $0$. Reverse \muon{} (blue) maps the larger ones to 1 and the smaller ones to $-1$. Computations are performed in \texttt{bfloat32}. All runs train GPT-2 Small on 1 billion tokens of FineWeb data with learning rate 0.05 and weight decay 0.1. When the cutoff that defines ``large'' and ``small'' singular values is $\gamma \approx 10^{-3}$, all three methods perform well, showing that the small singular directions do not matter. Bottom panel shows the polynomial defined by composing five iterations of \texttt{Polar Express}. Five iterations is just enough for singular values $\geq 10^{-3}$ to nearly converge.}
}
\label{fig:supression}
\end{figure}

\begin{figure}
\centering
\includegraphics[width=.9\textwidth]{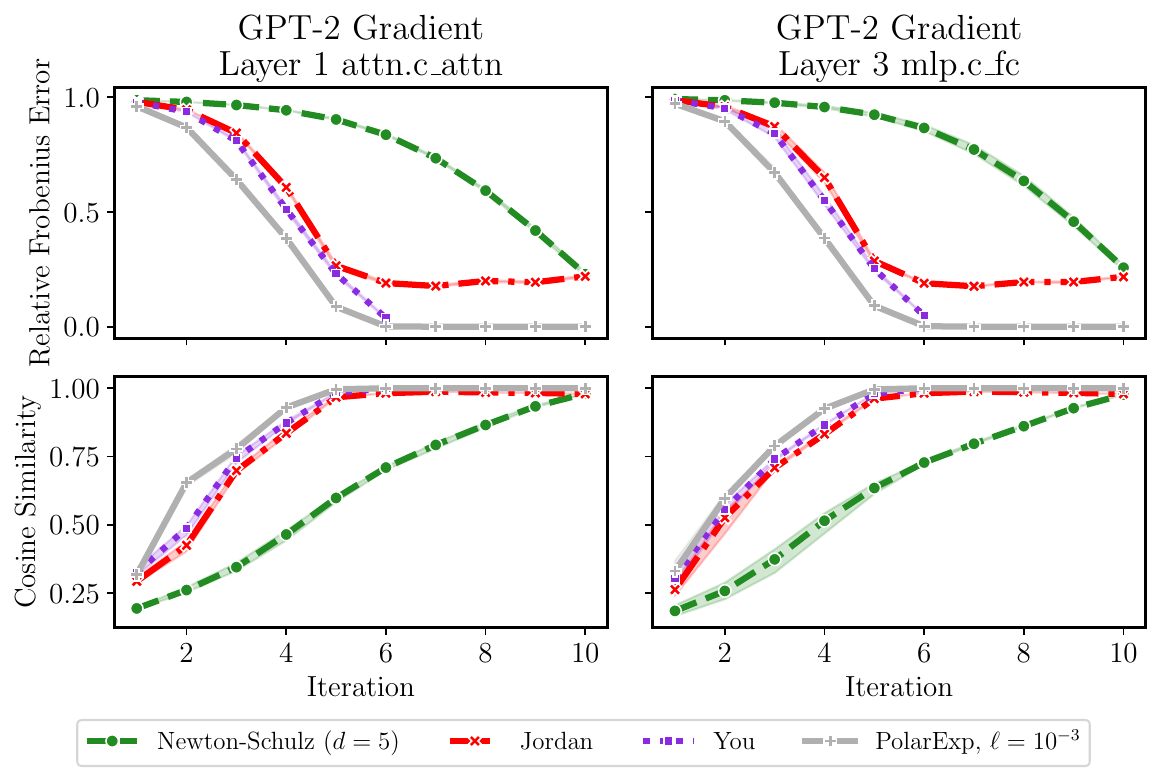}
\caption{\rev{Convergence of degree-5 polynomial methods, considering only singular values larger than $\sigma_{\max} / 10^3$. Test matrices are gradients of two layers of a randomly-initialized GPT-2 model on a batch of language modeling data. \texttt{Polar Express} converges in just five or six iterations and outperforms other methods.}
}
\label{fig:convergence_clipped_frob}
\end{figure}

\rev{
\paragraph{Convergence of Top Singular Values}
As discussed in the previous paragraph, we hypothesize that \muon{} may not be sensitive to the convergence of the small singular values of $\mM$ when approximating $\polar(\mM)$.
Therefore, in \Cref{fig:convergence_clipped_frob}, we plot the convergence of \texttt{Polar Express} and the baselines when all singular values smaller than $10^{-3}$ are ignored. 
Specifically, if $\mathrm{alg}(\mM)$ denotes the output of an algorithm for approximating $\polar(\mM)$, then we compare
\[\mU_1\mU_1^\top \cdot \mathrm{alg}(\mM) \cdot \mV_1\mV_1^\top \qquad \text{to} \qquad \mathrm{polar}_{10^{-3}}(\mM), \]
where $\mathrm{polar}_{10^{-3}}(\mM) = \mU_1\mV_1^\top = \mU_1\mU_1^\top \cdot \polar(\mM) \cdot \mV_1\mV_1^\top$ is the truncated polar factor defined above.
The results show that \texttt{Polar Express} converges in just six iterations as measured in the relative Frobenius norm and just five iterations when measuring in cosine similarity.
The other methods converge faster too, but \texttt{Polar Express} still outperforms them.
These results may explain why the performance of \muon{} saturates at five or six iterations of \texttt{Polar Express}, as shown in \Cref{fig:num_iters}.
}

\subsection{Training GPT-2}
\label{app:additional gpt}
\rev{
\paragraph{Additional Metrics}
We report additional results from the experiment of \Cref{sec:gpt}. In addition to showing validation loss vs. learning rate and training step, we also report \emph{training} loss vs. learning rate and training \emph{time}. The results are shown in \Cref{fig:full grid gpt-large no wd 1b,fig:full grid gpt-small no wd 1b}. The upper rows of each subfigure are identical to \Cref{fig:fineweb1B} and \Cref{fig:fineweb1B-full}, and are repeated here for ease of comparison. 
}

\rev{
\paragraph{Weight Decay} As described in \Cref{sec:gpt ablation}, we reran our GPT-2 training runs with weight decay of $0.1$. This change had little effect on the results, as shown in \Cref{fig:full grid yes wd}.
}

\rev{
\paragraph{Number of Training Tokens} We also reran some of our GPT-2 training runs using 10 billion tokens of training data instead of 1 billion.
As described in \Cref{sec:gpt ablation}, 10 billion tokens roughly matches the Chinchilla scaling rule for GPT-2-Large and exceeds it for GPT-2-Small.
Results are shown in \Cref{fig:full grid 10 b}.
Note that the top row of \Cref{fig:full grid gpt-large yes wd 10b} is identical to \Cref{fig:10bn_run}. \texttt{Polar Express} still outperforms the baselines across all conditions, but the gap shrinks as the training loss converges.
}

\begin{figure}
\begin{subfigure}{\textwidth}
\centering
\includegraphics[width=0.45\textwidth]{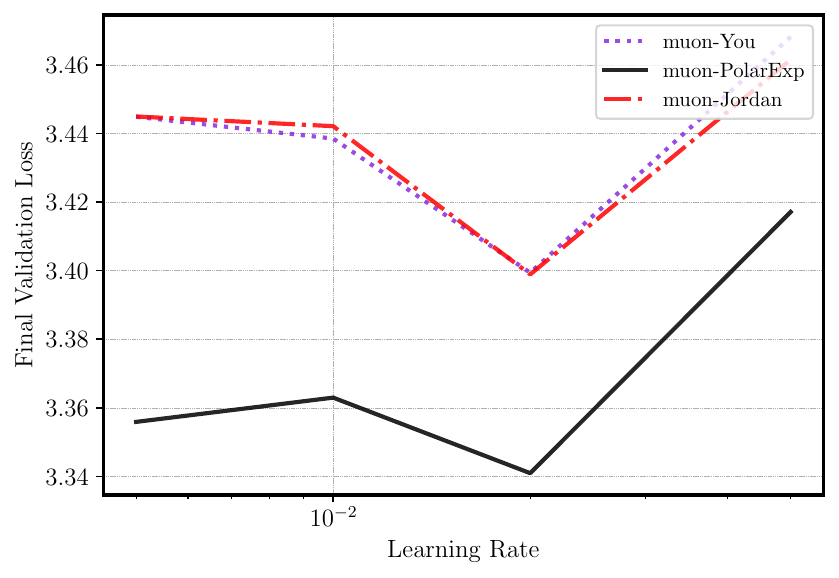}
\hfill
\includegraphics[width=0.35\textwidth]{img/gptopt-new-simple/paper-check-Large.pdf} \\
\includegraphics[width=0.45\textwidth]{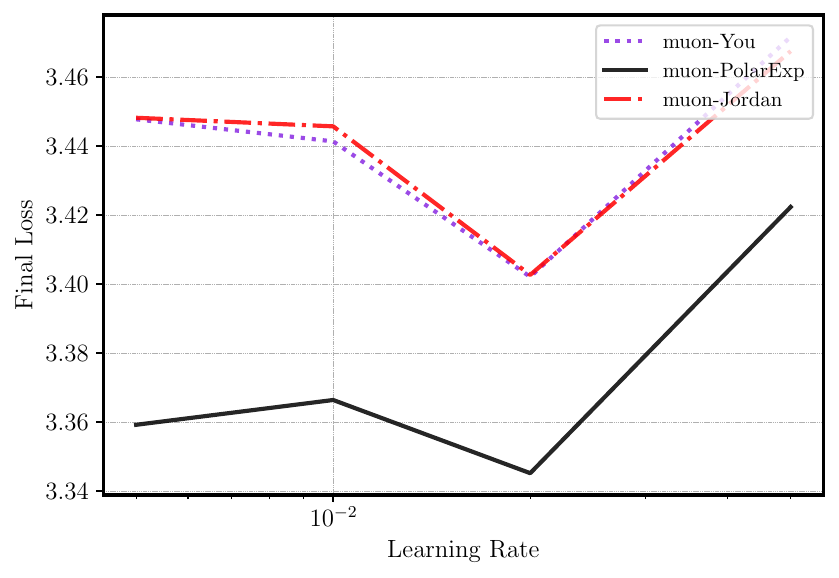}
\hfill
\includegraphics[width=0.35\textwidth]{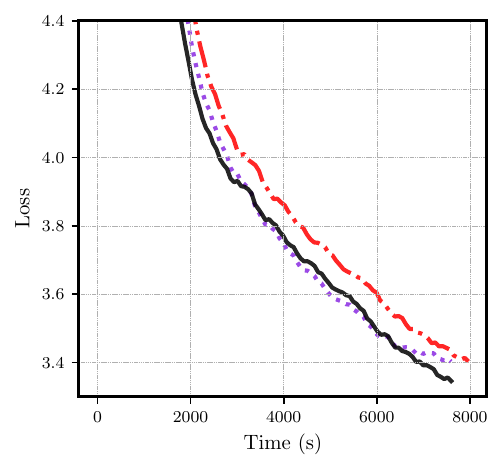}
\caption{GPT-2-Large (774M params). Best final validation losses were  \texttt{muon-You} (lr $= 0.02$): $3.399$,
 \texttt{muon-Jordan} (lr $= 0.02$): $3.398$ and 
\texttt{muon-PolarExp} (lr $= 0.02$): $3.340.$
}
\label{fig:full grid gpt-large no wd 1b}
\end{subfigure}
\begin{subfigure}{\textwidth}
\includegraphics[width=0.45\textwidth]{img/pole-fine1B-5-lr-sens-val.pdf}
\hfill
\includegraphics[width=0.35\textwidth]{img/pole-fine1B-5.pdf} \\
\includegraphics[width=0.45\textwidth]{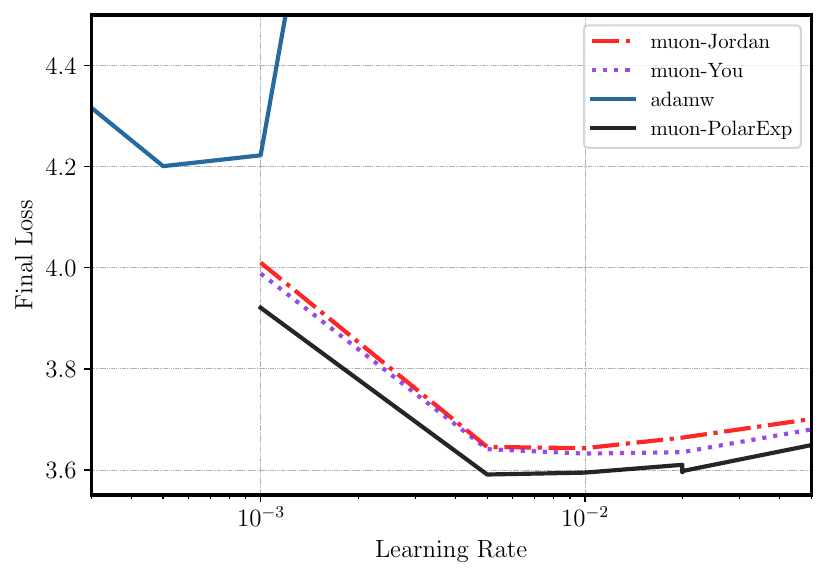}
\hfill
\includegraphics[width=0.35\textwidth]{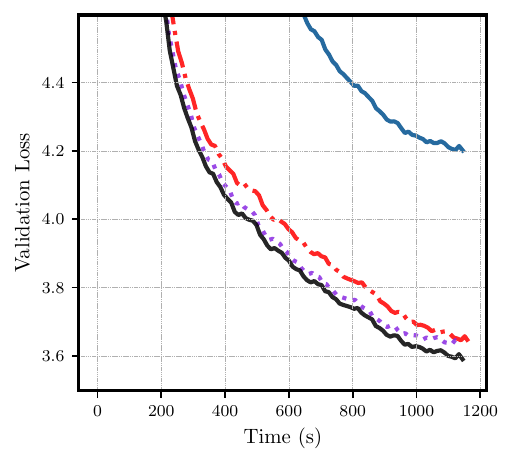}
\caption{GPT-2-Small (124M params). Best final validation losses were \texttt{adamw} (lr $=0.001$):  $4.197$, \texttt{muon-Jordan} (lr $=0.01$): $3.639$, \texttt{muon-You} (lr $=0.01$): $3.629$ and \texttt{muon-PolarExp} (lr = $0.005$): $3.588$.
}
\label{fig:full grid gpt-small no wd 1b}
\end{subfigure}
\caption{Training GPT-2 on 1 billion tokens of FineWeb data~\citep{aroca2023fineweb} without weight decay. The label muon-\texttt{<method>} denotes \muon{} with 5 iterations of \texttt{<method>} to compute $\polar(\mM)$. Top left: final validation loss vs. learning rate. Bottom left: final training loss vs. learning rate. Top right: validation loss vs. number of iterations for best learning rate. Bottom right: training loss vs. time for best learning rate.}
\label{fig:full grid no wd}
\end{figure}

\begin{figure}
\begin{subfigure}{\textwidth}
\centering
\includegraphics[width=0.45\textwidth]{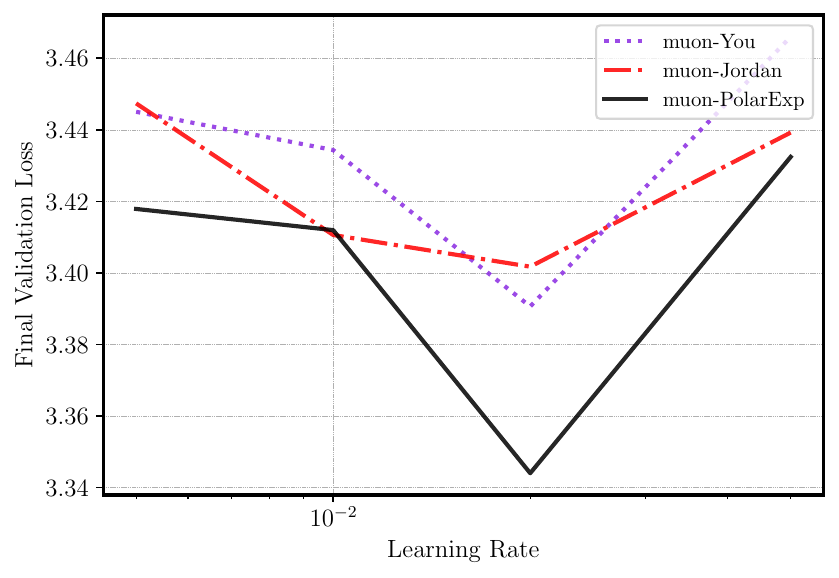}
\hfill
\includegraphics[width=0.35\textwidth]{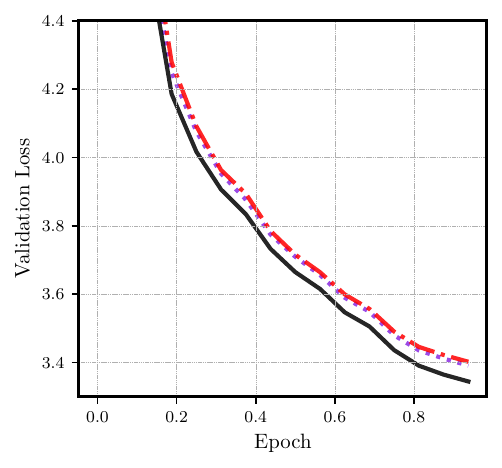} \\
\includegraphics[width=0.45\textwidth]{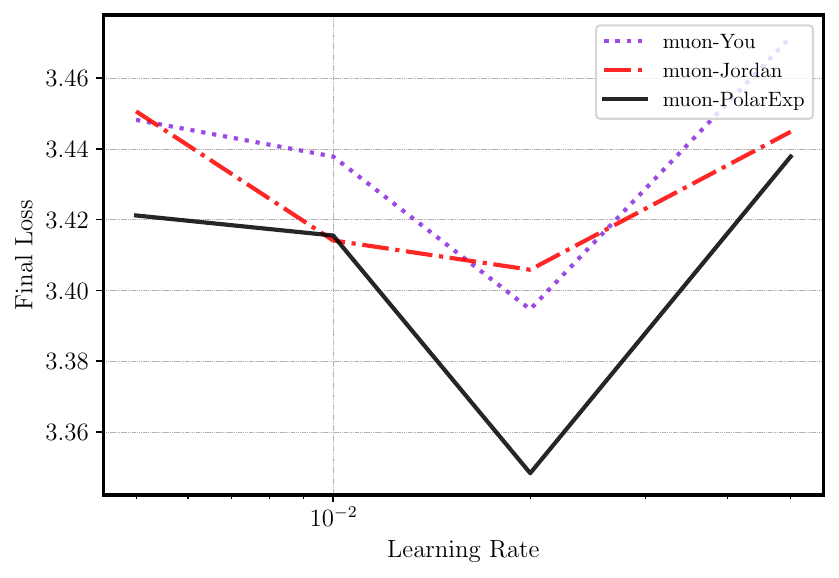}
\hfill
\includegraphics[width=0.35\textwidth]{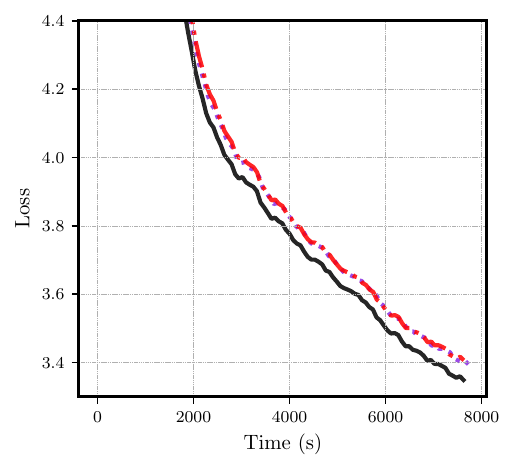}
\caption{GPT-2-Large (774M params). Best final validation losses were  \texttt{muon-You} (lr $=0.02$): $3.390$,
 \texttt{muon-Jordan} (lr $=0.02$): $3.401$ and 
\texttt{muon-PolarExp} (lr $=0.02$): $3.344.$
}
\label{fig:full grid gpt-large yes wd 1b}
\end{subfigure}
\begin{subfigure}{\textwidth}
\includegraphics[width=0.45\textwidth]{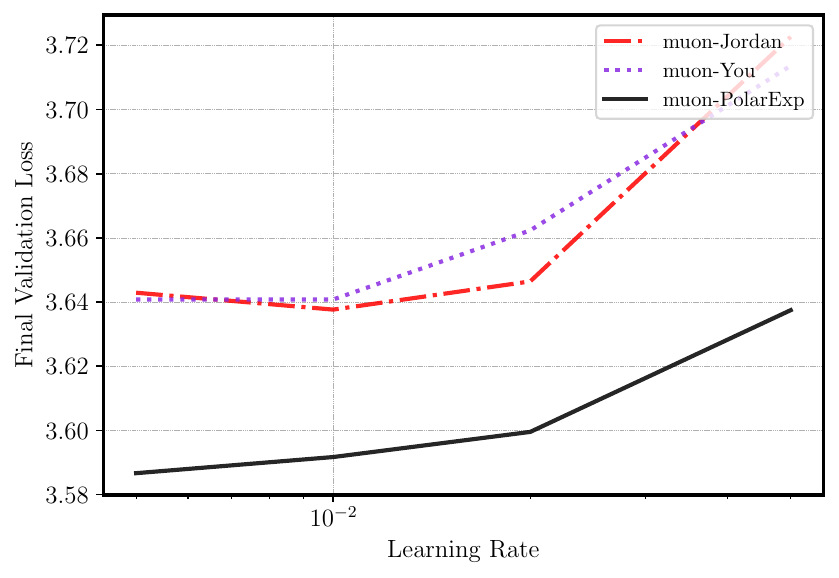}
\hfill
\includegraphics[width=0.35\textwidth]{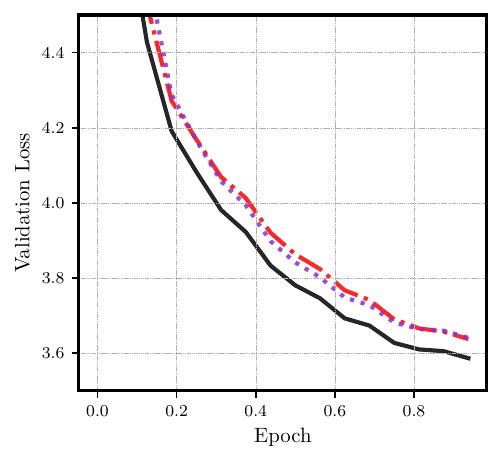} \\
\includegraphics[width=0.45\textwidth]{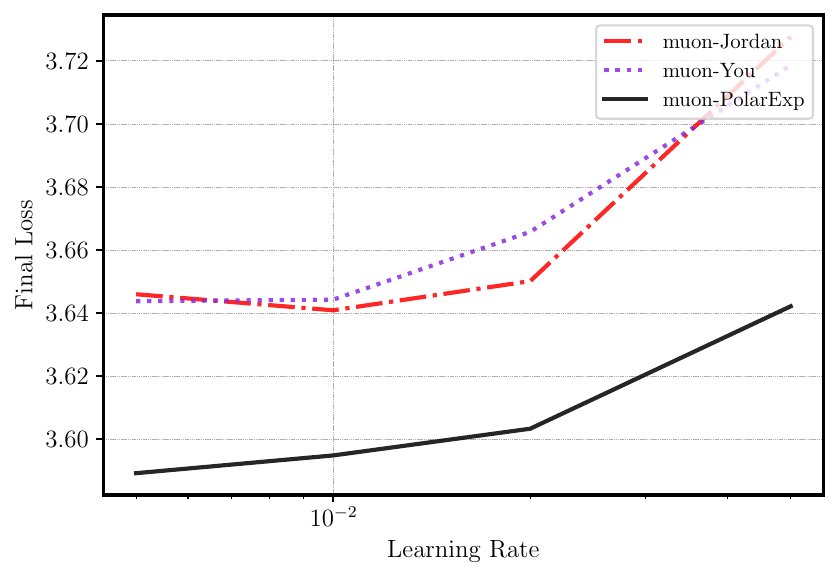}
\hfill
\includegraphics[width=0.35\textwidth]{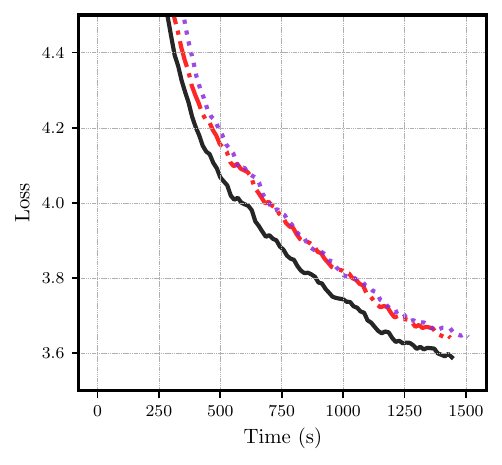}
\caption{GPT-2-Small (124M params). Best final validation losses were \texttt{muon-Jordan} (lr $=0.01$): $3.638$, \texttt{muon-You} (lr $=0.005$): $3.641$ and \texttt{muon-PolarExp} (lr $=0.005$): $3.587$.
}
\label{fig:full grid gpt-small yes wd 1b}
\end{subfigure}
\caption{Training GPT-2 on 1 billion tokens of FineWeb data~\citep{aroca2023fineweb} \textbf{with weight decay $0.1$}. The label muon-\texttt{<method>} denotes \muon{} with 5 iterations of \texttt{<method>} to compute $\polar(\mM)$. Top left: final validation loss vs. learning rate. Bottom left: final training loss vs. learning rate. Top right: validation loss vs. number of iterations for best learning rate. Bottom right: training loss vs. time for best learning rate. }
\label{fig:full grid yes wd}
\end{figure}

\begin{figure}
\begin{subfigure}{\textwidth}
\centering
\includegraphics[width=0.45\textwidth]{img/revisions-noah-polar-branch/10b_data/10b_data-nl-36-wd-0.1-lr-sens-val.pdf}
\hfill
\includegraphics[width=0.35\textwidth]{img/revisions-noah-polar-branch/10b_data/10b_data-nl-36-wd-0.1.pdf} \\
\includegraphics[width=0.45\textwidth]{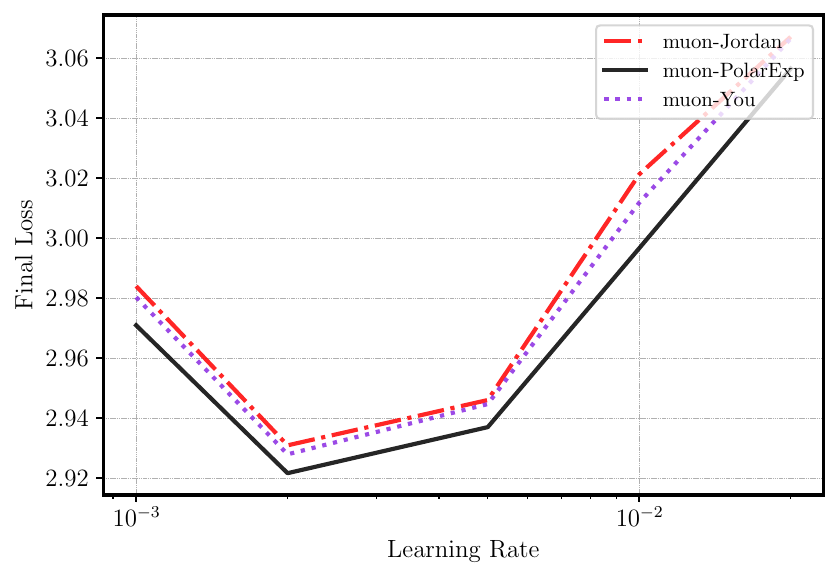}
\hfill
\includegraphics[width=0.35\textwidth]{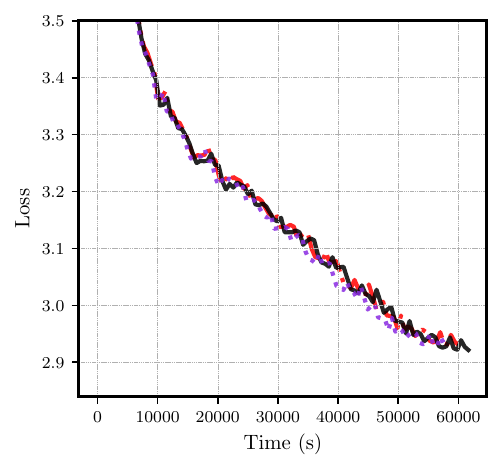}
\caption{\rev{GPT-2-Large (774M params) with weight decay $0.1$. Best final validation losses were \texttt{muon-Jordan} (lr = $0.002$): $2.921$, \texttt{muon-You} (lr = $0.002$): $2.919$ and \texttt{muon-PolarExp} (lr = $0.002$): $2.913$.
}}
\label{fig:full grid gpt-large yes wd 10b}
\end{subfigure}
\begin{subfigure}{\textwidth}
\includegraphics[width=0.45\textwidth]{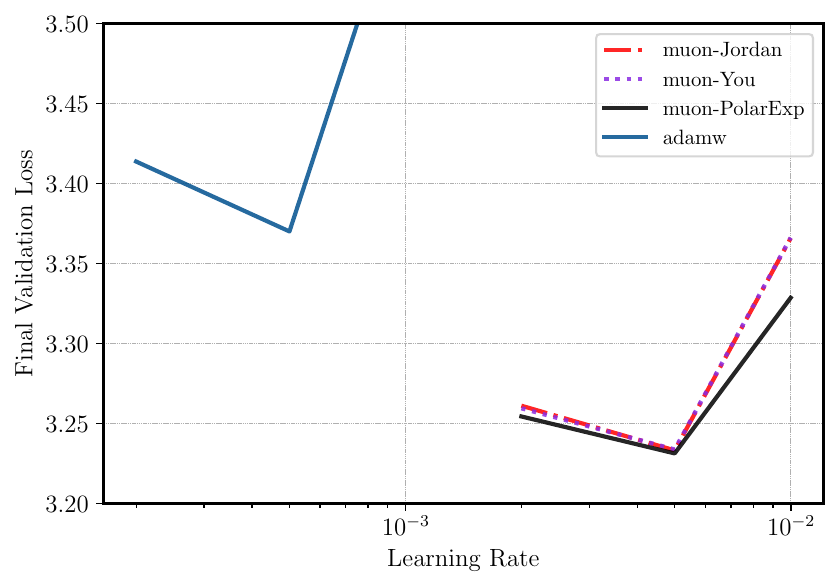}
\hfill
\includegraphics[width=0.35\textwidth]{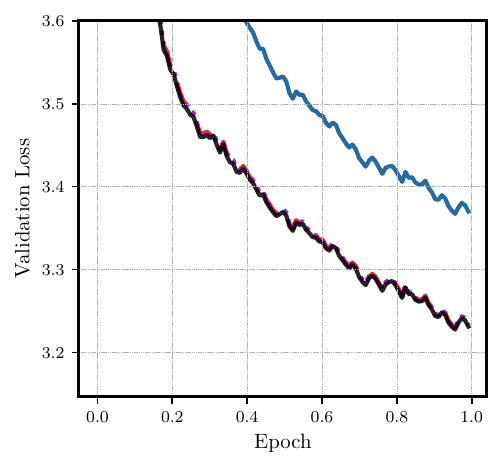} \\
\includegraphics[width=0.45\textwidth]{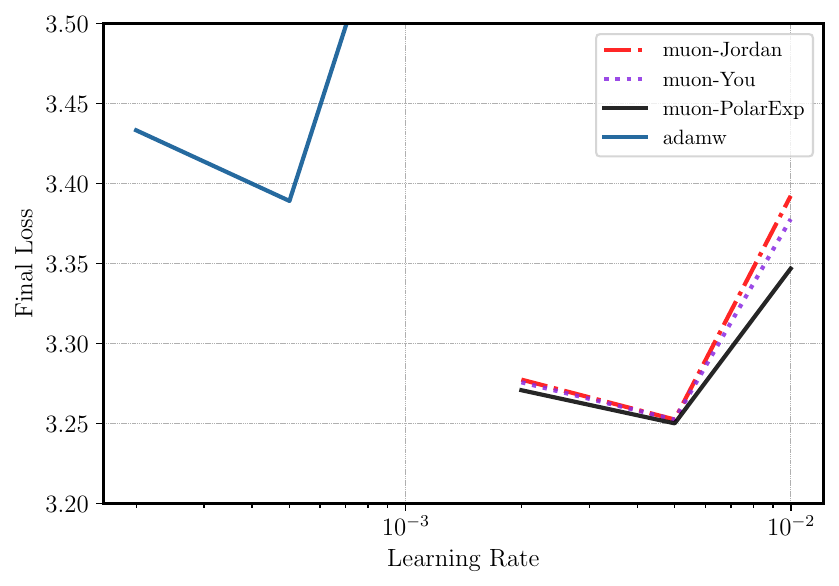}
\hfill
\includegraphics[width=0.35\textwidth]{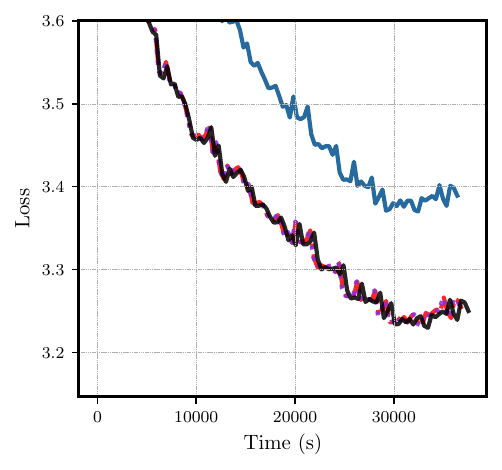}
\caption{\rev{GPT-2-Small (124M params) without weight decay. Best final validation losses were \texttt{adamw} (lr = $0.0005$): $3.370$, \texttt{muon-Jordan} (lr = $0.005$): $3.233$, \texttt{muon-You} (lr = $0.005$): $3.234$ and \texttt{muon-PolarExp} (lr = $0.005$): $3.231$.
}}
\label{fig:full grid gpt-small no wd 10b}
\end{subfigure}
\caption{\rev{Training GPT-2 \textbf{on 10 billion tokens of FineWeb data}~\citep{aroca2023fineweb}. The label muon-\texttt{<method>} denotes \muon{} with 5 iterations of \texttt{<method>} to compute $\polar(\mM)$. Top left: final validation loss vs. learning rate. Bottom left: final training loss vs. learning rate. Top right: validation loss vs. number of iterations for best learning rate. Bottom right: training loss vs. time for best learning rate.}}
\label{fig:full grid 10 b}
\end{figure}

\subsection{Image Classification}

\rev{
We conducted experiments on the CIFAR-10 and CIFAR-100 image classification benchmarks \citep{krizhevsky2009learning} using ResNet-20 and ResNet-110 architectures with batch normalization \citep{he2016deep}. We used a range of learning rates in the range $10^{-6}$ to $1$ with a constant learning-rate schedule, a batch size of 128, and 50 epochs of training data. We used three different random seeds for each hyperparameter setting to assess stability and variability. As a baseline, we also included AdamW and SGD with momentum \citep{adam}. Results are given in \Cref{fig:cifar10,fig:cifar100}. For these experiments we see that all the \muon{} variants performed well, matching or exceeding the training loss and validation accuracy of \texttt{AdamW} and \texttt{sgd-m} while also being more stable with respect to the choice of learning rate. However, we do not see a marked difference between the varieties of \muon{}. Indeed, even Newton-Schulz (degree $=5$) performs equally well in this context, despite being significantly less accurate than \texttt{PolarExpress}, \texttt{Jordan} or \texttt{You}.
}

\rev{
Next we train a Vision Transformer (patch size 4, embedding dimension 512, depth 6, 8 heads, MLP dimension 512, dropout 0.1) on CIFAR-10 for 200 epochs with batch size 512 using a constant learning rate schedule. Results are shown in \Cref{fig:vit10}.
\muon{} with \texttt{Polar Express} achieved the best training and validation loss (closely followed by Jordan's and You's methods). However, improved loss did not entirely translate to better accuracy: both \muon{} and Newton-Schulz and Adam performed well in terms of validation accuracy. Overall, these experiments do not show a consistent advantage for \texttt{Polar Express}. Further work may be beneficial to fully realize the potential benefits of \muon{} and to further tune \texttt{Polar Express} for these settings.}

\begin{figure}
    \centering
    \includegraphics[width=0.46\linewidth]{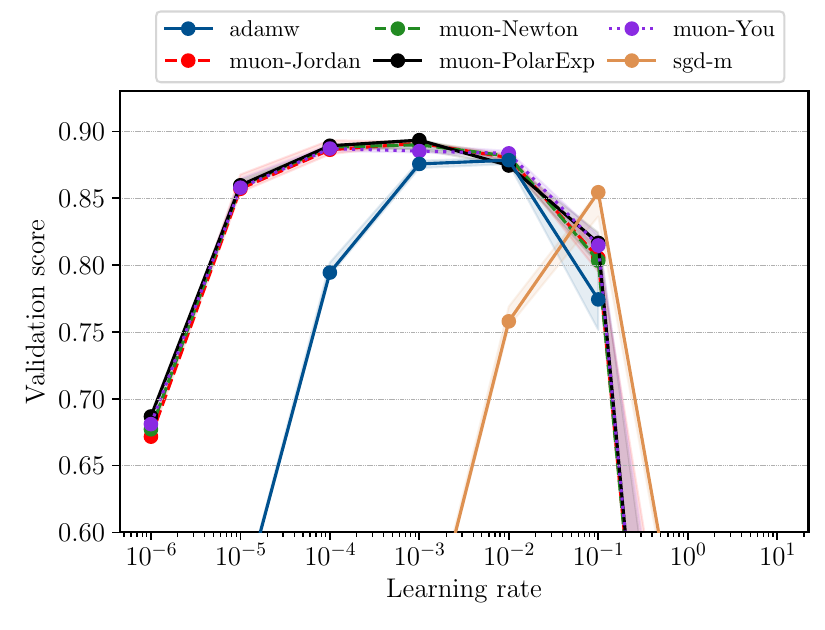}
    \hfill
      \includegraphics[width=0.46\linewidth]{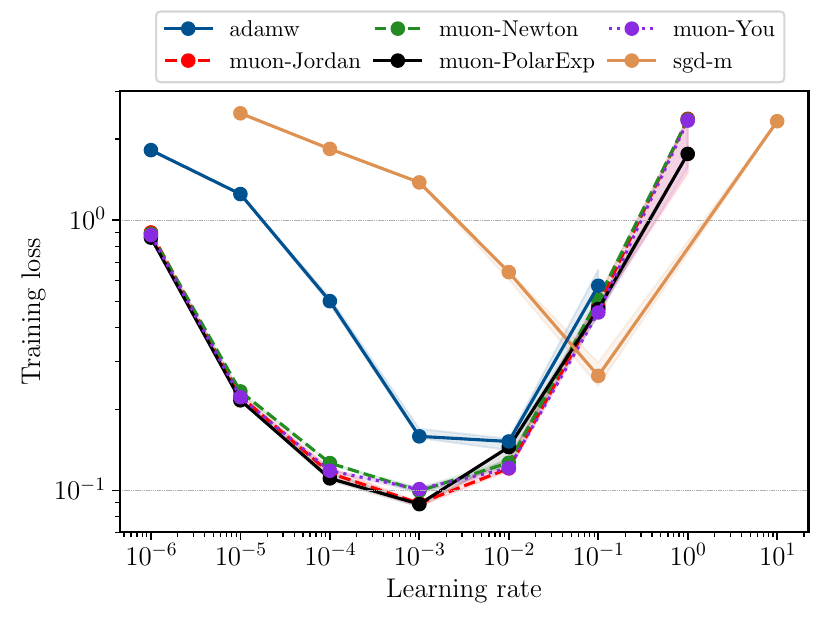}
    \caption{\rev{\texttt{CIFAR10} with a \texttt{RESNET20}. Shaded regions show range over three random seeds.
The best validation accuracy for each method was
\texttt{sgd-m} (lr $= 0.1$): $0.855$
\texttt{Adamw} (lr $= 0.01$): $0.878$
\texttt{muon-You} (lr $= 0.001$): $0.887$,
\texttt{muon-Newton} (lr $= 0.001$): $0.890$, 
\texttt{muon-Jordan} (lr $= 0.001$): $0.891$,
\texttt{muon-PolarExp} (lr $= 0.001$): $0.893$.
}}
    \label{fig:cifar10}
\end{figure}




\begin{figure}
    \centering
          \includegraphics[width=0.46\linewidth]{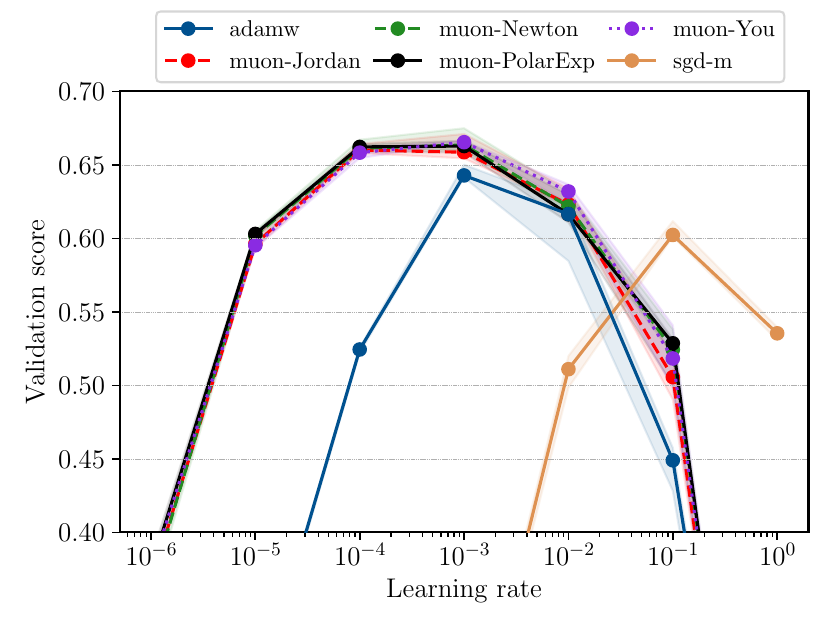}
    \hfill
    \includegraphics[width=0.46\linewidth]{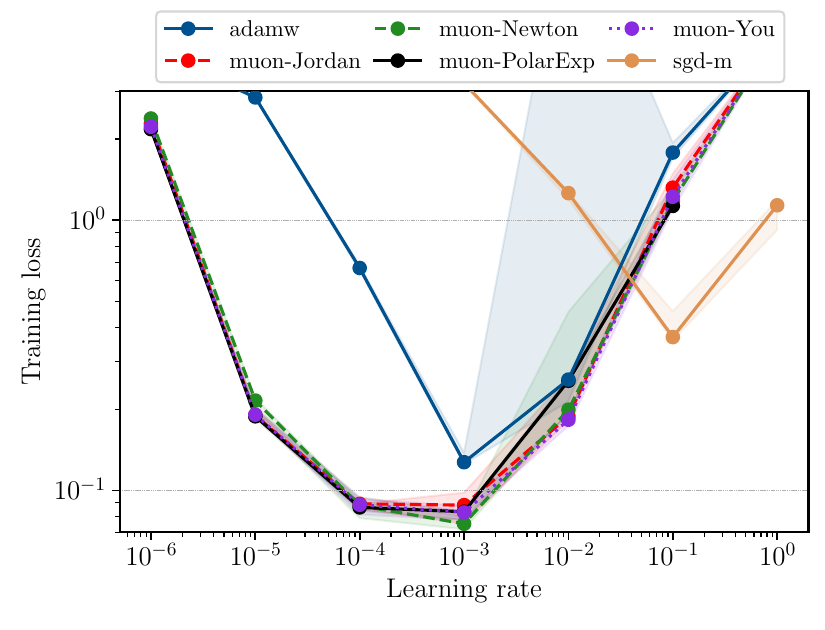}
    \caption{\rev{\texttt{CIFAR100} with \texttt{RESNET110}. Shaded regions show range over three random seeds. The best validation accuracy for each method was
     \texttt{sgd-m} (lr $= 0.1$): $0.602$,
    \texttt{Adamw} (lr $= 0.01$): $0.643$,
 \texttt{muon-Jordan} (lr $= 0.001$): $0.660$,
  \texttt{muon-Newton} (lr $= 0.001$): $0.663$.
 \texttt{muon-PolarExp} (lr $= 0.001$): $0.663$,
 \texttt{muon-You} (lr $= 0.001$): $0.665$,
}}
    \label{fig:cifar100}
\end{figure}




\begin{figure}
    \centering
          \includegraphics[width=0.46\linewidth]{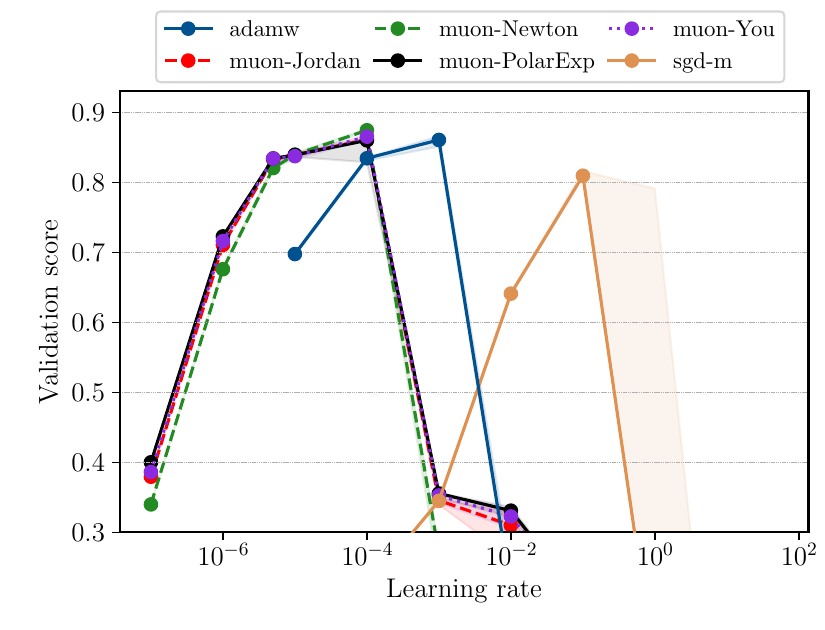}
    \hfill
    \includegraphics[width=0.46\linewidth]{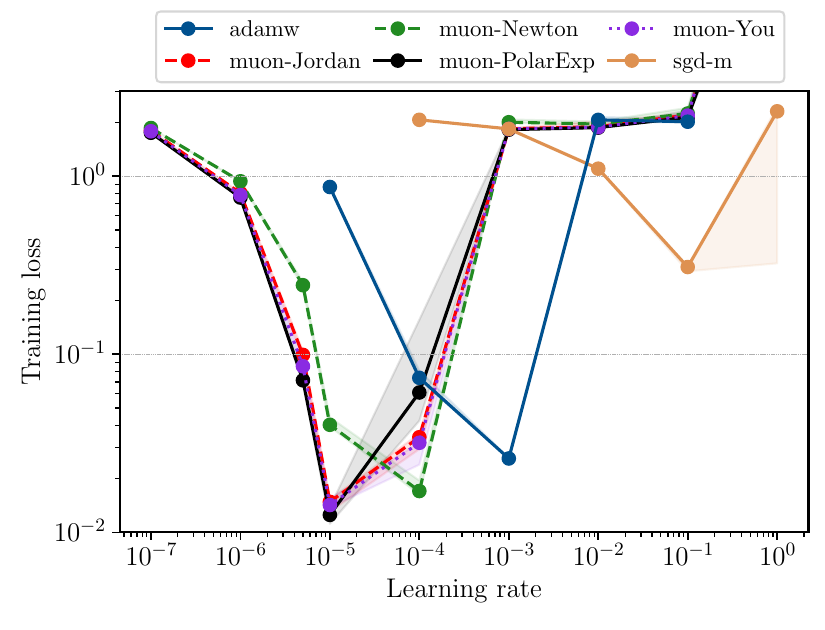}
    \caption{\rev{\texttt{CIFAR10} with a \texttt{VIT}. Shaded regions show range over three random seeds. The best validation accuracy for each method was
 \texttt{sgd-m} (lr $= 10^{-1}$): $0.809$,
 \texttt{muon-PolarExp} (lr $= 10^{-5}$): $0.860$,
\texttt{Adamw} (lr $= 10^{-3}$): $0.861$,
 \texttt{muon-Jordan} (lr $= 10^{-5}$): $0.861$,
 \texttt{muon-You} (lr $= 10^{-5}$): $0.865$,
 \texttt{muon-Newton} (lr $= 10^{-4}$): $0.874$
 .
}}
    \label{fig:vit10}
\end{figure}





\section{Initialization for Matrices with Large Spectral Gaps}
\label{sec:smart_init}
In \Cref{sec:optimal_polynomials}, we constructed a sequence of polynomials that is adapted to the range of the singular values $[\ell, u]$.
Assuming nothing else about the input, these polynomials are optimal since they provide a good approximation to $1$ across the entire interval.
However, in many applications, the spectrum has large gaps; that is, there are several large outlying singular values that are well-separated from the rest.
For these matrices, it is not necessary for the polynomial to be accurate on the entire interval $[\ell, u]$, only on the range of the small singular values plus a few other isolated points.
In this section, we take advantage of this structure to accelerate our method by preprocessing the matrix to eliminate the largest singular values.

The first step is to find small intervals containing each of these large singular values.
To find lower bounds, we use subspace iteration, which is a generalization of the power method that approximates multiple singular values simultaneously.
Fix $k$, the number of singular values we wish to eliminate.
Letting $\sigma_1 \geq \cdots \geq \sigma_n$ denote the singular values of $\bm M$, subspace iteration produces estimates $\tilde \sigma_1 \geq 
\cdots \geq \tilde \sigma_k$ satisfying $\sigma_i \geq \tilde \sigma_i$ for all $i \in 1,\ldots,k$.\footnote{Let $\mQ_0 \in \R^{n \times k}$ be a random matrix with orthonormal columns and define $\bm{Q}_{t+1},
\mR_{t+1} = \mathtt{qr}\left(\mM^\top \mM \bm Q_t\right)$, where $\mathtt{qr}$ is the QR decomposition. Subspace iteration outputs the singular values $\tilde \sigma_1, \ldots, \tilde \sigma_k$ of $\mM \mQ_T$, 
$\tilde \sigma_1, \ldots, \tilde \sigma_k$. By the Cauchy interlacing theorem, $\tilde \sigma_k \leq \sigma_k$.}
To find upper bounds on each $\sigma_i$, we can use the fact that $\|\bm M\|_\F^2 = \sum_{j=1}^n \sigma_j^2$ as follows:
\begin{equation}\label{eq:sigma_bound}
\sigma_i^2
= \|\bm M\|_\F^2 - \sum\limits_{\substack{j  = 1 \\ j \neq i}}^n \sigma_j^2
\leq \|\bm M\|_\F^2 - \sum\limits_{\substack{j = 1 \\ j \neq i}}^k \sigma_j^2
\leq \|\bm M\|_\F^2 - \sum\limits_{\substack{j = 1 \\ j \neq i}}^k \tilde \sigma_j^2
\end{equation}
That is, for each $i \in [n]$, \[\sigma_i \in \left[\tilde \sigma_i, \,\, \sqrt{\|\bm M\|_\F^2 - \sum\limits_{\substack{j = 1 \\ j \neq i}}^k \tilde \sigma_j^2}\right]\]
Setting $i=k+1$, the above also provides an upper bound for the tail of the spectrum, $\sigma_{k+1}, \ldots, \sigma_n$.

The second step is to find an odd polynomial that well-approximates the constant function on each of these intervals and on the tail simultaneously.
For simplicity, we treat only the $k=1$ case here.
Assume that $\mM$ is normalized to $\|\mM\|_\F = 1$ and let $z = \tilde \sigma_1$ be the lower bound produced by subspace iteration (which reduces to the power method in this case).
Then \eqref{eq:sigma_bound} gives $\sigma_1 \in [z, 1]$ and $\sigma_2, \ldots, \sigma_n \leq \sqrt{1-z^2}$.
Assume that these intervals do not overlap, that is, $\sqrt{1- z^2} \leq z \iff z \geq 1/\sqrt{2}$.
Then we construct the unique odd cubic polynomial $p(x) = ax + bx^3$ that satisfies $p(\sqrt{1-z^2})=1$ and $p(z) = 1$ by setting
\begin{equation}\label{eq:init_poly}
a = \frac{z^2 (z + \sqrt{1-z^2}) - \sqrt{1-z^2}}{z \sqrt{1-z^2} (2 z^2 - 1)}
\qquad b = \frac{\sqrt{1-z^2}-z}{z \sqrt{1-z^2} (2 z^2 - 1)}
\end{equation}
Because $p(0)=0$ and $p$ has at most one local extremum on $\R_{\geq 0}$, these conditions immediately guarantee that $p$ is concave-increasing on $[0, \sqrt{1-z^2}]$, so it must lie above the line $x \mapsto x/\sqrt{1-z^2}$. Furthermore, $p$ is decreasing on $[\sigma_1, 1]$, so it maps $\sigma_1 \in [z, 1]$ to $[p(1), 1]$.
By minimizing $p(1)$ over all valid $z$ (that is, over the interval $z \in [1/\sqrt{2}, 1]$), one can further show that $p(1) > 1/\sqrt{2}$, so $\sigma_1$ cannot be decreased very much by applying $p$.
Thus, the largest singular value of $p(\mM)$ is still at most $1$, while the smaller singular values have increased by a potentially large factor of $1/\sqrt{1-z^2}$.
When there is a large outlying singular value, $z$ is close to $1$ and this initialization scheme makes much more progress than a standard iteration of \texttt{PolarExpress} would have.

In \Cref{fig:smart_init}, we demonstrate the benefit of using the $p$ given by \eqref{eq:init_poly} on a synthetic matrix whose spectrum follows a power law decay. That is, $\sigma_j(\mM) = j^{-5}$, so this matrix has a large outlying singular value $\sigma_1 \gg \sigma_2$.
Applying \eqref{eq:init_poly} costs almost as much as performing an iteration of a degree-5 polynomial method, so for fair comparison, we count it as an additional iteration in this plot.
For both Newton-Schulz and \texttt{Polar Express}, performing the extra spectrum-aware initialization step described in this section leads to significant speedups in convergence.

\begin{figure}
    \centering
    \includegraphics[width=0.6\linewidth]{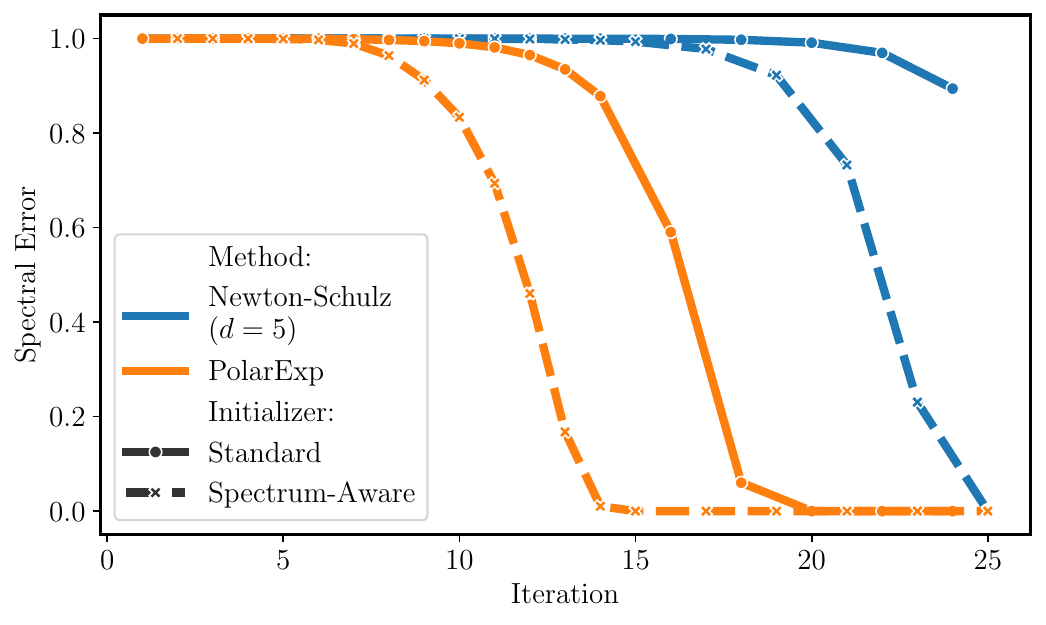}
    \caption{Benefits of the spectrum-aware initialization scheme of \Cref{sec:smart_init}. Using this scheme improves convergence of both Newton-Schulz and \texttt{Polar Express} on a synthetic $32 \times 32$ matrix with $\sigma_j(\mM) = j^{-5}$. Note that we count the spectrum-aware initialization as an additional iteration.}
    \label{fig:smart_init}
\end{figure}

\section{Fast Polynomial Iteration for Rectangular Matrices}
\label{sec:fast_apply}
In this section, we describe a simple method for applying an iterative polynomial method to a rectangular matrix.
For matrices with a large aspect ratio, this method yields significant computational savings.
We emphasize that this method is applicable to \emph{any} computation of the form $(p_T \circ \cdots \circ p_1)(\bm{X})$, where each $p_t$ is an odd polynomial.
Thus, it can be used to apply Newton-Schulz or Jordan's polynomials in addition to our own.

As a preliminary, we first describe the baseline approach.
Let $\bm X \in \R^{m \times n}$ with $m \geq n$, where $\alpha := m / n \geq 1$ is called the aspect ratio.
Any odd polynomial $p$ of degree $d = 2q+1$ can be represented as $p(x) = xh(x^2)$, where $h$ is a polynomial of degree $q$.
Thus, $p(\bm X) = \bm X h(\bm X^\top \bm X)$.
Furthermore, $h$ can be written in a factored form called Horner's rule to reduce the number of multiplications.
For instance, if $h(y) = a + by + cy^2 + dy^3$, Horner's rule gives $h(y) = a + y\left(b + y\left(c + dy\right)\right)$.
For a matrix, $h(\bm Y) = a\bm{I} + \bm{Y}\left(b\bm{I} + \bm{Y}\left(c\bm{I} + d\bm{Y}\right)\right)$.
Thus for $\bm Y \in \R^{n \times n}$, computing $h(\bm Y)$ costs about $\left(\deg(h) - 1\right)\cdot n^3$ operations, and computing $p(\bm X) = \bm X h(\bm X^\top \bm X)$ costs $2mn^2 + \left(\frac{d-1}2 - 1\right)\cdot n^3 = \left(\frac{d-3}2 + 2\alpha\right)\cdot n^3$ operations.
This process could be repeated for each iteration $p_1, \ldots, p_T$.
Notice that if we instead computed $h(\bm X \bm X^\top) \bm X$, the result would be the same but the cost would be higher.

A major drawback of this naive approach is that it has a strong dependence on $\alpha$, since two rectangular matrix multiplications must be performed in \emph{each} of the $T$ iterations.
When $m \gg n$, these two multiplications dominate the cost.
In \Cref{alg:fastapply}, we introduce a simple trick that dramatically reduces this cost, using just two rectangular matrix multiplications to compute \emph{all} $T$ iterations.

\begin{algorithm}
\caption{Fast Polynomial Iteration for Rectangular Matrices}\label{alg:fastapply}
\textbf{input:} $\bm{X} \in \R^{m \times n}$ with $m > 1.5 n$, odd polynomials $p_1(x) = x h_1(x^2), \ldots, p_T(x) = xh_T(x^2)$.\\
\textbf{output:} The matrix $(p_T \circ \cdots \circ p_1)(\bm{X})$.
\begin{algorithmic}
\State $\bm Y = \bm X^\top \bm X$ \Comment{$mn^2$}
\State Let $\bm Q_0 = \bm I$
\For{$t = 1,2,\ldots,T$}
    \State $\bm R_t = \bm Q_{t-1}^\top \bm Y \bm Q_{t-1}$ \Comment{$2n^3$}
    \State $\bm Q_t = \bm Q_{t-1} h_t(\bm R_t)$ \Comment{Horner's rule: $\deg(h_t) \cdot n^3$}
\EndFor
\State \textbf{return} $\bm X \bm Q_T $ \Comment{$mn^2$}
\end{algorithmic}
\end{algorithm}

To see why this works, define $q_0(x) = 1$,
\begin{align}
q_t(x)
&= \frac{(p_t \circ \cdots \circ p_1)(x)}x
= \frac{p_t\left((p_{t-1} \circ \cdots \circ p_1)(x)\right)}x
= \frac{p_t\left(x q_{t-1}(x)\right)}x \\
&= \frac{x q_{t-1}(x)\cdot h_t\left((x q_{t-1}(x))^2\right)}x
= q_{t-1}(x)\cdot h_t\left(x^2 \cdot q_{t-1}(x)^2\right)
\end{align}
and $r_t(x) = x^2 \cdot q_{t-1}(x)^2$.
It is clear by induction that $\bm R_t = r_t(\bm X), \bm Q_t = q_t(\bm X)$, and $\bm X \bm Q_T = (p_t \circ \cdots \circ p_1)(\bm X)$.
As promised, this algorithm uses no rectangular multiplications in the for-loop.
If each $p_t$ is degree $d$, then the total cost is $\left(\frac{d+3}2 T + 2\alpha\right)\cdot n^3$.
When $\alpha > 1.5 \frac{T}{T-1}$, this is smaller than the naive method.
We can use this criterion to select either \Cref{alg:fastapply} or the baseline method at runtime.\footnote{Notice that $\mQ_T \to \mY^{-1/2}$. This shows that the \texttt{Polar Express} polynomials also give a method of computing the inverse square root of a PSD matrix.}

\Cref{alg:fastapply} can introduce numerical errors, especially when working in a low precision format like \texttt{bfloat16}.
We identify two sources of numerical trouble and propose remedies for each.
The first is due to the ill-conditioning of $\mX$. Let $\mX = \mU \mSigma \mV^\top$ be the SVD. For large $T$, $(p_T \circ \cdots p_1)(\mX) = \mX \mQ_T \approx \polar(\mX) = \mU \mV^\top$. Thus, $\mQ_T \approx \mV^\top \mSigma^{-1} \mV$. When $\mX$ has very small singular values and the floating point precision is very low, instantiating $\mQ_T$ may be unstable.
To mitigate this issue, we use a restarting strategy.
Notice that the issue arises only for large $T$, for which $(p_T \circ \cdots \circ p_1)(\epsilon) \approx 1$.
Limiting ourselves to $T=3$ iterations improves the conditioning of $\mQ_T$ because $(p_T \circ \cdots \circ p_1)(\epsilon) \ll 1$.
Thus, to compute $T>3$ iterations, we begin with $\mX_0$ and apply \Cref{alg:fastapply} with the first three polynomials, producing $\mX_3$. When then apply \Cref{alg:fastapply} again with the next three polynomials to $\mX_3$, producing $\mX_6$, and so on. As $\mX_t$ approaches convergence, its conditioning improves and we may no longer need to restart at all.
Note that restarting \Cref{alg:fastapply} after every iteration is exactly the same as the baseline method.

Second, while the matrix $\mY$ is positive definite in exact arithmetic, numerical round-off can introduce spurious negative eigenvalues that cause the method to diverge to infinity.
To combat this issue, we instead set $\mY = \mX^\top \mX + 10^{-3}\mI$ during the first application of \Cref{alg:fastapply}. (We also normalize by $\|\mX\|_\F + 10^{-3}$ instead of $\|\mX\|_\F$.) In subsequent restarts of \Cref{alg:fastapply}, we set $\mY = \mX^\top \mX$ as before.
This is akin to slightly increasing each of the singular values of $\mX$, but it does \emph{not} change the polar factor of $\mX$.
Thus, while the output will be slightly different in the early iterations, the algorithm still converges to the correct answer.

\Cref{fig:fast_apply} shows that using \Cref{alg:fastapply} can significantly improve runtime on the GPU when the aspect ratio is large enough.
As expected, using \Cref{alg:fastapply} for many iterations significantly reduces the dependence of the runtime on the aspect ratio.
Running six iterations of a degree-5 polynomial method when $\alpha = 4$ (as with the linear transformations in each MLP block of a transformer) we obtain almost a 2x speedup, and when $\alpha = 32$, we obtain a 5x speedup.
If we restart every three iterations, the trend is the same but the runtime savings are somewhat smaller.

\begin{figure}
    \centering
    \includegraphics[width=0.8\linewidth]{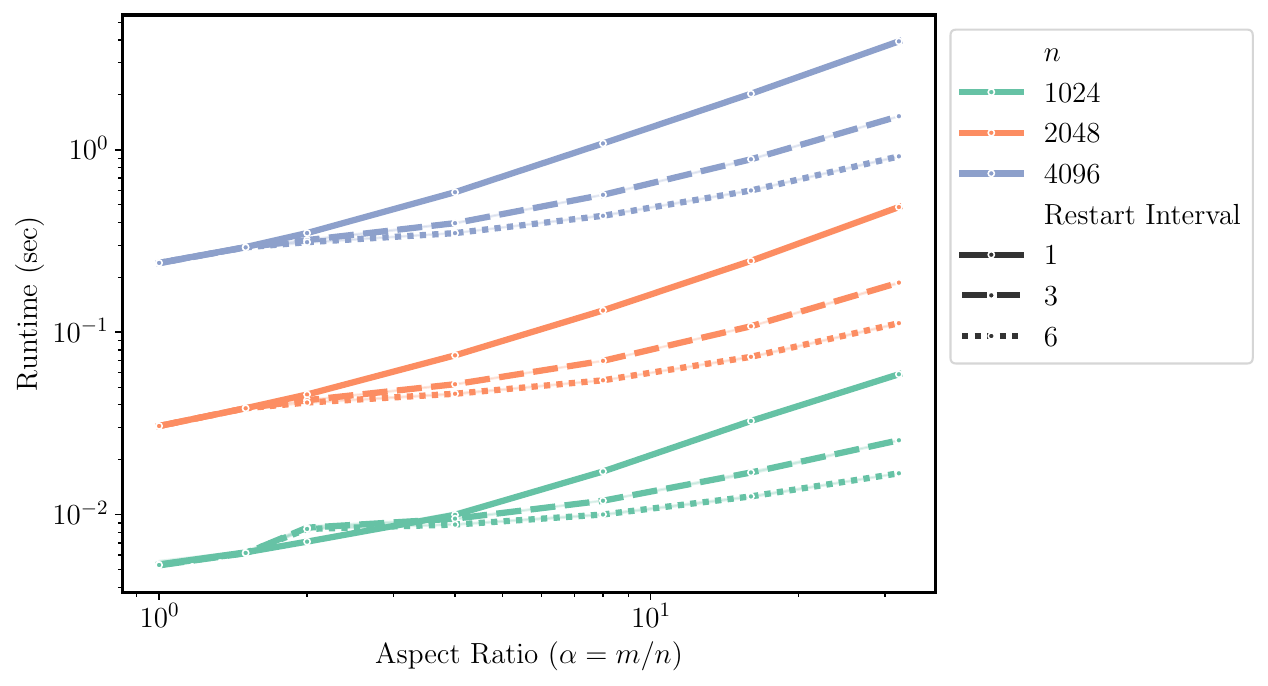}
    \caption{Effects of using \Cref{alg:fastapply} on runtime on a GPU. We run $T=6$ iterations of a degree-5 polynomial method on matrices with various dimensions $n$ and aspect ratios $\alpha$. Restart interval $=6$ is \Cref{alg:fastapply}, restart interval $=1$ is equivalent to the baseline (that is, not using \Cref{alg:fastapply}), and restart interval $=3$ is an intermediate method that calls \Cref{alg:fastapply} once to do the first three iterations and again to do the last three iterations for greater stability.
    When $\alpha \gg 1$, increasing the restart interval \emph{significantly} reduces the runtime.}
    \label{fig:fast_apply}
\end{figure}

\subsection{Application to \muon}
If these problems can be mitigated, the speed afforded by \Cref{alg:fastapply} suggests an improvement in the way \muon is applied to transformers.
In sum, the idea is to replace one large matrix with a small aspect ratio by many smaller matrices with large aspect ratios and apply \Cref{alg:fastapply} to all of them in parallel.
Each multi-head attention layer contains four square weight matrices $\bm W_Q, \bm W_K, \bm W_V$ and $\bm W_O \in \R^{d \times d}$.
The orthogonalization step of \muon is either applied separately to these four matrices or else to $[\bm W_Q \mid \bm W_K \mid \bm W_V]$ and $\bm W_O$, since typical implementations of multi-head attention store the weights in this concatenated form.
However, we believe it is natural to consider each of these four weight matrices to be a concatenation of many smaller linear transformations, each corresponding to a single attention head.
If $H$ is the number of heads, each of these smaller matrices has size $d \times \frac{d}H$; that is, they have aspect ratio $\alpha = H$.
The gradient matrices of $[\bm W_Q \mid \bm W_K \mid \bm W_V]$ and $\bm W_O$ can be reshaped into 3-tensors in which each slice is one of these smaller matrices.
Since typical transformers like GPT-3 can have as many as $96$ heads, this variation of \muon has the potential to reduce the runtime.

We use this idea to train a GPT-Small model on FineWeb1B. We compare four conditions:
\begin{enumerate}
\item The baseline approach used in the rest of this paper (not splitting $[\bm W_Q \mid \bm W_K \mid \bm W_V]$ and not using \Cref{alg:fastapply})
\item Splitting up the gradient matrices of $[\bm W_Q \mid \bm W_K \mid \bm W_V]$ and $\bm W_O$ by head and applying Muon to each piece, as described above
\item Using \Cref{alg:fastapply}, restarted after three iterations, on all rectangular weight matrices
\item Splitting by head \emph{and} using \Cref{alg:fastapply}
\end{enumerate}
We used \texttt{Polar Express} with weight decay of $0.1$ for all conditions and swept learning rates $0.003, 0.005, 0.01$. Otherwise, all hyperparameters were the same as in \Cref{sec:gpt}.

Our results showed that these changes had a negligible effect in this setting.
They did not affect the optimization quality.
Compared to the baseline, splitting by heads actually reduced the final loss slightly from 3.59 to 3.55; using \Cref{alg:fastapply} increased the loss very slightly, from 3.59 to 3.60 when not splitting by head, and from 3.55 to 3.56 when we did split.
However, the runtimes of all 12 runs were nearly identical, showing that at this scale, the FLOP savings of \Cref{alg:fastapply} is not beneficial.
The embedding size of GPT-Small is just $768$. These techniques may be more impactful when using a larger model. It may also have more impact outside of deep learning, where \texttt{Polar Express} would be run for more than the $5$ iterations used in our experiments.
We leave exploration of these settings to future work.

\end{document}